\newtheorem{theorem}{Theorem}
\newtheorem{definition}{Definition}
\newtheorem{remark}{Remark}
\newtheorem{lemma}{Lemma}
\newtheorem{claim}{Claim}
\title{Online Learning for Cooperative Multi-Player Multi-Armed Bandits}
\author{%
	William Chang, Mehdi Jafarnia-Jahromi and Rahul Jain \\
	University of Southern California\\
	\texttt{(chan087,mjafarni,rahul.jain)@usc.edu} \\
}
\begin{document}
	
	\maketitle
	
	\begin{abstract}
		We introduce a framework for decentralized online learning for multi-armed bandits (MAB) with multiple cooperative players. The reward obtained by the players in each round depends on the actions taken by all the players. It's a team setting, and the objective is common. Information asymmetry is what makes the problem interesting and challenging. We consider three types of information asymmetry: action information asymmetry when the actions of the players can't be observed but the rewards received are common; reward information asymmetry when the actions of the other players are observable but rewards received are IID from the same distribution; and when we have both action and reward information asymmetry. For the first setting, we propose a UCB-inspired algorithm that achieves $O(\log T)$ regret whether the rewards are IID or Markovian. For the second section, we offer an environment such that the algorithm given for the first setting gives linear regret. For the third setting, we show that a variation of the `explore then commit' algorithm achieves almost log regret. 
	\end{abstract}
	
	\section{Introduction}\label{sec:intro}

Multi-armed bandit (MAB) models are prototypical models for online learning to understand the exploration-exploitation tradeoff. There is huge literature on such models beginning with Bayesian bandit models \cite{gittins2011multi}, and non-Bayesian bandits \cite{lattimore2020bandit} which were introduced by Lai and Robbins in \cite{lai1985asymptotically}. A key algorithm for MAB models is the UCB$_1$ algorithm introduced in \cite{auer2002finite} that spurred a lot of innovation on such  \textit{Optimisim in the face of uncertainty (OFU)} algorithms. This included multiplayer multi-armed bandit models introduced in a matching context in \cite{gai2012combinatorial, liu2010distributed, anandkumar2011distributed, kalathil2014decentralized, nayyar2016regret}. This was motivated by the problem of spectrum sharing in wireless networks wherein the users want to get matched to channels, each channel can be occupied by only one wireless user, and together they want to maximize the expected sum throughput.

In this paper, we  consider a general multi-agent multi-armed bandit model wherein each agent has a different set of arms. The players have a team (common) objective, and the individual rewards depend on the arms selected by all the players. This can be seen as a multi-dimensional version of a single agent MAB model, and still the same centralized algorithm should work. The twist is that there is \textit{information asymmetry} between the players. This can be of various types. First, a player may not be able to observe the actions of the other players (action information asymmetry). Thus, despite the reward being common information, each agent is not really able to tell what arm-tuple the reward came from. Second, the players may observe the actions of the other players, but they can get different i.i.d. rewards corresponding to the distribution of the arm-tuple played (reward information asymmetry). Third, we may have both action and reward information asymmetry between the agents. The two types of information asymmetry  make the problem of decentralized online learning in multi-player MAB models  challenging. Furthermore, as we will see, the three problems are of increasing complexity. The rewards may be i.i.d., i.e., every time an arm-tuple is chosen, players get a i.i.d. random reward from the corresponding distribution. Or they could be Markovian, i.e., the rewards come from a Markov chain corresponding to the arm-tuple chosen. Such a Markov chain only evolves when the corresponding arm-tuple is chosen. 

There have been a number of papers on decentralized MAB models beginning with \cite{kalathil2014decentralized}, which gave the first sublinear (log-squared) regret algorithm, as well as \cite{nayyar2016regret} which gave the first decentralized algorithm with order-optimal regret, followed by a number of others such as \cite{avner2014concurrent} and  \cite{bistritz2018distributed}. All of these results are really about the matching bandits problem, i.e., the players in a decentralized manner want to learn the optimal matching. The setting in this paper is different, and in some sense more general. It is akin to players playing a game with the caveat that the reward depends on actions taken by all of them but is common (or from the same distribution). 

We first consider the problem with action information asymmetry. We define a standard index function for each player, and propose a variant of the UCB$_1$ algorithm we call \texttt{mUCB}. We show that it achieves $O(\log T)$ regret when the gap between the means for the arms is known, and it gets $O(\sqrt{T})$ when the gap is not known.   Next, we consider the problem when there is both action and reward information asymmetry, and show that the \texttt{mDSEE} algorithm is able to achieve near log  regret bound for this setting. Finally next consider the problem with reward information asymmetry wherein actions are observable but rewards to various players are IID (though come from the same distribution), and provide an environmennt such that \texttt{mUCB} gives linear regret. However, as this is a special case of the setting with both action and reward information asymmetry, it is clear that \texttt{mDSEE} will obtain the same order regret in this problem.

\paragraph*{Related Work.} We now discuss related work in more detail.

The literature on multi-armed bandits is overviewed in \cite{sutton2018reinforcement,gittins2011multi,lattimore2020bandit}. Some classical papers worth mentioning are \cite{lai1985asymptotically,anantharam1987bandits,auer2002finite}. Interest in multi-player MAB models was triggered by the problem of opportunistic spectrum sharing and some early papers were \cite{gai2012combinatorial,liu2010distributed,anandkumar2011distributed}. Other papers motivated by similar problems in communications and networks are \cite{maghsudi2014channel,korda2016distributed,shahrampour2017multi,chakraborty2017coordinated}. These papers were either for the centralized case, or considered the symmetric user case, i.e., all users have the same reward distributions. Moreover, if two or more users choose the same arm, there is a ``collision", and neither of them get a positive reward.  The first paper to solve this matching problem in a general setting was \cite{kalathil2014decentralized} which obtained log-squared regret. It was then improved to log regret in \cite{nayyar2016regret} by employing a posterior sampling approach. These algorithms required implicit (and costly) communication between the players. Thus, there were attempts to design algorithms without it \cite{avner2014concurrent, rosenski2016multi, bistritz2018distributed, feraud2019decentralized}.  \cite{besson2018multi} considers the lower bound on regret of the certain for multiplayer MAB algorithms. Other recent papers on decentralized learning for multiplayer matching MAB models are \cite{wang2020optimal, shi2020decentralized}.

Another related thread is the learning and games literature \cite{young2004strategic, fudenberg1998theory, cesa2006prediction}. This literature deals with a strategic learning wherein each player has its own objective, and the question whether a learning process can reach some sort of an equilibrium. Some key results here are \cite{hart2003uncoupled} which showed that learning algorithms with uncoupled dynamics do not lead to a Nash equilibrium. It was also shown that simple adaptive procedures lead to a correlated equilibrium \cite{hart2000simple, hart2001general, hart2001reinforcement, hart2013simple}.  In fact, the model in this paper may be regarded as more closely related to this literature than the multiplayer/decentralized matching MAB models discussed above except that all players have a common objective, a l\'a team theory \cite{marschak1972economic}. 

In the realm of multiplayer stochastic bandits, nearly all works allow for limited communication such as those in \cite{martinez2018decentralized, martinez2019decentralized, szorenyi2013gossip, karpov2020collaborative, tao2019collaborative}. An exception is \cite{bistritz2021one} where all the players select from the same set of arms and their goal is to avoid a collision, that is, they do not want to select the same arm as another player. Another exception is \cite{xu2015distributed} where they developed  online learning algorithms that enable agents to cooperatively learn how to maximize reward with noisy global feedback without exchanging information.

	\section{Preliminaries and Problem Statements}\label{sec:prelim}

\subsection{Multi-player MAB model with IID Rewards} 
\paragraph*{Problem A: Action Information Asymmetry with unobserved actions, common rewards.}
We first present a multi-player multi-armed bandit (MMAB) model, and then a series of problem formulations. Consider a set of $M$ players $P_1,\cdots,P_M$, in which player $P_i$ has a set ${\mathcal K}_i$ of $K_i$ arms to pick from. At each time instant, each player picks an arm from their set with the $M$-tuple of arms picked denoted by $a=(a_1,\cdots,a_M)$. This generates a random reward $X_{a} \in [0,1]$ (without loss of generality for bounded reward functions) from a $1$-subgaussion distribution $F_{a}$  with mean $\mu_{a}$. Denote $\Delta_{a} = \mu^* - \mu_{a}$ where $\mu^*$ is the highest reward mean among all arm tuples. Let $a_i(t) \in \{1,\cdots,K_i\}$ be the arm chosen by player $i$ at time $t$, and denote $a(t) = (a_1(t),\cdots,a_M(t))$. If arms $a(t)$ are pulled by the players collectively, the reward to all the players  equals $X_{a(t)}(t)$, i.e., the reward is common, depends on $a(t)$ and is independent across time instants. A high-level objective is for the players to collectively identify the best set of arms $a^*$ corresponding to mean reward $\mu^*$. But the players do not know the means $\mu_a$, nor the distributions $F_a$. They must learn by playing and exploring. Thus, we can capture learning efficiency of an algorithm via the notion of per player \textit{expected regret},
\begin{equation}\label{eq:regret}
	R_T = \mathbb{E}\left[T\mu^* - \sum_{t=1}^T X_{a(t)}(t)\right]
\end{equation}
where $T$ is the number of learning instances  and $X_{a(t)}(t)$ is the random reward if arm-tuples $a(t)$ are pulled.  Note that if the players jointly learn the optimal arms $a^*$ eventually, per unit expected regret $R_T/T \to 0$ as $T \to \infty$. Thus, our goal is to design a multi-player decentralized learning algorithm that has sublinear (expected) regret. Note that fundamental results for single-player MAB problems \cite{lai1985asymptotically} suggest a $O(\log T)$-regret lower bound for the multi-player MAB problem as well. If we can design a multi-player decentralized learning algorithm with such a regret order, then it would imply that such a lower bound is tight for this setting as well. A multi-player decentralized learning for an MAB problem would essentially be like a single-player (centralized) learning for an MAB problem with multi-dimensional arms if all players have the same information while making decisions at all times. But this may not hold in various applications as various players may not be able to observe for example, actions of the other players. Thus, there may be \textit{information asymmetry} between the players, which makes the problem much more challenging. We specifically exclude any explicit communication between the players during learning though they may coordinate \textit{a priori.} 

\textit{Markovian Reward model for Problem A.} Finally, we will consider rewards to be Markovian, i.e., for a fixed tuple of arms $a$, the reward sequence $(X^i_a(t))$ for a fixed $i$ is a Markov chain. And moreover, the Markov chains for $i=1,\cdots,M$ are independent and identical. The notion of per-player expected regret would be the same as defined in \eqref{eq:regret}. 

\paragraph*{Problem B': Reward Information Asymmetry with observed actions, independent rewards.} We will see this as a special case of Problem B below.
Here we consider the setting where each agent can observe the actions of all the other agents, but the rewards to each player are different, i.e., if the arms pulled at time $t$ are $a(t)$, each player gets an i.i.d. copy of the rewards from the distribution $F_{a(t)}$, i.e., player $i$ gets $X^i_{a(t)}(t)$. Thus, the  expected regret of player $i$ is obtained by replacing $X_{a(t)}(t)$ in \eqref{eq:regret} with $X^i_{a(t)}(t)$. But note that the expected regret would be the same for all players since the rewards obtained are IID.
All of the players $P_1,\cdots, P_M$ can see what arms the other players are pulling, however, the rewards are identically and independently generated for both players. This means each of the players would end up seeing different reward realizations but from the same distributions.

\paragraph*{Problem B: Action and Reward Information Asymmetry with unobserved actions, independent rewards.}
Next we can have asymmetry in information in terms of action observations, as well as rewards. We will assume that no agent can observe the actions of the others, as well as the rewards to each player are independent and from the same distribution $F_{a(t)}$ if the tuple of actions taken at time $t$ is $a(t)$. In this case, the expected regret for each player is still defined as in \eqref{eq:regret}. Note that Problem B is more general Problem B'.

\paragraph*{Our Contributions.}
For Problem A, we provide a simple and elegant method method for multiple players to coordinate their arms only knowing the global rewards unseen in previous literature. On Problem B, we are able to provide an algorithm that provides a gap-independent algorithm which attains the almost optimal near-log regret, whereas the disCo algorithm in \cite{xu2015distributed} is gap-dependent. For special case Problem B', we give a counterexample which shows that Algorithm \ref{algo:mucb} designed for Problem A obtains linear regret even when the players can see the other player's arms. 


	\section{Online Multi-player Bandit Learning:  Algorithms and Main Results}\label{sec:algos}

We now present  algorithms and their regret performance for the various settings introduced above. 

\subsection{Problems A: IID and Markovian Rewards}\label{sec:algo-probA} 

Let us first define a UCB (Upper Confidence Bound) index to be used by each player. Suppose at time $t$, player $i$ chooses arm (action) $a_i$. Then, the UCB index for player $i$ is given by
\begin{equation}\label{index:mucb}
	\eta^i_{a}(t) =  \begin{cases}
		\infty, & \text{if } n_a(t) = 0,\\
		\hat{\mu}^i_{a}(t, n_a(t)) + \sqrt{\frac{2\log(1/\delta)}{n_a(t)}}, & \text{otherwise}.
	\end{cases}
\end{equation}
$a = (a_1,\cdots,a_M)$, $\hat{\mu}^i_{a}(t,n_a(t))$ is the average of the rewards received by player $i$ from arm $a$ after $t$ total rounds, and $n_a(t)$ is the number of times arm $a$ has been selected in $t$ rounds. $\delta$ is a confidence parameter which shall be selected later. Note that in the setting here, since all players receive a common reward, $\hat{\mu}^i_a(t,n_a)$ and thus the indices  $\eta^i_a(t)$ are the same for all players. Further note that in this setting, while each player only gets to observe the common reward but not the actions of the player, the players can agree that in the first $K_{\max} = K_1\cdots K_M$ times, the players will play a predetermined sequence of actions such that they together  have at least one reward from each arm-tuple $a = (a_1,\cdots, a_M)$ where $a_i =1,\cdots,K_i$. This initial exploration sequence is similar to that in a single player \texttt{UCB}$_1$ algorithm. At the end of the initial round, all players have received the same reward observations and also know the arm-tuple for each such reward from which they come. Thus, the index computed by each player $i$ for any arm-tuple $a$ is the same.  Thus, while say player $i$ cannot observe the actions of the other players but since they all use the same algorithm and the information available to all is the same, they can anticipate the actions that will be taken by the other players. Thus, at each time instant, index \eqref{index:mucb} can be updated as a function of both action $a_i$ of player $i$ but also actions of the other players. The only difficulty arises if two indices for two arm-tuples say $a$ and $a'$ is the same, in which case they must all break the tie in the same way. Thus, to state the \texttt{mUCB} algorithm, we  define a total order on a set on $\mathbb{R}^m$ as follows. 
\begin{definition}\label{def:order}
	We say that $(x_1,\cdots,,x_M) < (y_1,\cdots,y_M)$ if and only if there exists an $n$ such that $\forall i < n$, $x_i = y_i$, and $x_n < y_n$. 
\end{definition}

We are now ready to define the multi-player UCB algorithm as follows:
\begin{algorithm}[H]
	\caption{\texttt{mUCB} algorithm}
	\begin{algorithmic}[1]
		\State \textbf{Input:} $(\mathcal{K}_1,\cdots,\mathcal{K}_M)$. 
		
		\For{$t \leq K_{\max}$}
		\State 	Player $P_i$ will start from his arm $1$ and successively pull each arm $K_{i+1}\cdots K_{M}$ times before moving to the next arm. They will repeat this entire epoch $K_1\cdots K_{i-1}$ times. 
		\EndFor
		
		\For{$t > K_{\max}$}
		\State Player $P_i$ chooses arm $a_i ^*(t)= \arg \max_{a_i} \left(\max_{a_{-i}}\eta^i_{a_i,a_{-i}}(t)\right)$ in \eqref{index:mucb} which corresponds to player $i$ picking the $i$th component of $a^*(t)$ that maximises the index $\eta_a(t)$.  In case of a tie between say $a$ and $a'$, they pick corresponding components of $a$ such that $a < a'$, where the order relation is as specified in Definition \ref{def:order}.
		
		\State Player $P_i$ updates the UCB index $\eta^i_a(t+1)$ for arm $a$ setting $\delta = \frac{1}{t^2}$ with the received reward $X^i_{a^*(t)}(t)$.
		\EndFor
	\end{algorithmic}
	\label{algo:mucb}
\end{algorithm}

\begin{remark}
	(i) We note that the index \eqref{index:mucb} and Algorithm \ref{algo:mucb} are used for both cases of IID and Markovian rewards though the regret bounds will differ as will see.
	(ii) While each player can only observe their own actions, the reward obtained depends on actions taken by all of them together and is common. However, because the players all have the same observations, due to the coordination in the initial exploration round, they can anticipate actions taken by the others, as well as compute the indices as function of the actions taken by all of them. Thus, despite not being able to observe the actions of the others, nor being able to communciate among themselves, they are still able to coordinate their actions. 
\end{remark}



Thus, we can prove the following upper bounds on gap-dependent expected regret for each player.
\begin{theorem}\label{thm:problemA}
	If each player uses the algorithm \texttt{mUCB} in the setting of Problem A, and the rewards are IID, then the expected regret  $R_T$ is upper bounded as:
	\begin{equation}\label{eq:regretB-gapdep}
		R_T \leq 3 \sum_{a}\Delta_{a} +  \sum_{a: \Delta_a>0}\frac{\left(6+4\sqrt{2}\right)\log T}{\Delta_a}.
	\end{equation} 
\end{theorem}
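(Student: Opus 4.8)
The plan is to reduce the decentralized problem to a single-player UCB analysis over the product arm space, and then run a standard optimism argument. First I would establish the \emph{coordination invariant}: because the reward is common, every player's empirical mean $\hat\mu^i_a$ coincides, the count $n_a(t)$ is shared, and the forced exploration over the first $K_{\max}$ rounds guarantees each tuple $a$ has been sampled once (so no index is left at $\infty$ and all players start from identical statistics). Consequently the index tables $\eta^i_a(t)$ are identical across $i$ for every $t$, and with the deterministic total order of Definition~\ref{def:order} resolving ties the same way for everyone, the per-coordinate choice $a_i^*(t)=\arg\max_{a_i}\max_{a_{-i}}\eta_{a_i,a_{-i}}(t)$ of player $i$ is exactly the $i$-th coordinate of the common global maximizer $a^*(t)=\arg\max_a\eta_a(t)$. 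Hence the joint trajectory $a(t)$ produced by \texttt{mUCB} is identical to the one a centralized \texttt{UCB}$_1$ (with confidence $\delta=1/t^2$) would produce on a single-agent bandit whose $K_{\max}$ arms are the tuples $a$. This collapses Problem~A to the textbook single-player setting.

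Second, I would write the regret in the standard per-arm form. Since the realized reward has conditional mean $\mu_{a(t)}$, a tower-property computation gives $R_T=\mathbb E\big[\sum_t \Delta_{a(t)}\big]=\sum_a\Delta_a\,\mathbb E[n_a(T)]$, so it suffices to bound $\mathbb E[n_a(T)]$ for each suboptimal tuple ($\Delta_a>0$). The forced-exploration phase contributes exactly one pull of each tuple, i.e.\ the additive $\sum_a\Delta_a$; the remaining two units in the ``$3$'' will come from the concentration tail below.

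Third, for each suboptimal $a$ I would run the optimism/good-event argument. Choose a threshold $u_a=\lceil (6+4\sqrt2)\log T/\Delta_a^2\rceil$ and split $n_a(T)\le u_a+\sum_t\mathbf 1\{a(t)=a,\ n_a(t)\ge u_a\}$. On the good event that $a^*$ is never under-estimated, $\eta_{a^*}(t)\ge\mu^*$, so a pull of $a$ with $n_a(t)\ge u_a$ forces $\hat\mu_{a,s}-\mu_a$ to exceed a fixed fraction of $\Delta_a$; the threshold $u_a$ is exactly the point past which the radius $\sqrt{4\log t/s}$ drops below that fraction of $\Delta_a$. The two deviation probabilities (optimal tuple under-estimated, suboptimal tuple over-estimated) are each controlled by $1$-subgaussian (Hoeffding-type) tails, e.g.\ $P(\mu^*-\hat\mu_{a^*,s}\ge\sqrt{4\log t/s})\le t^{-2}$, and summing these over the counts and over $t$ with $\delta=1/t^2$ leaves only an $O(1)$ residual, yielding $\mathbb E[n_a(T)]\le 3+(6+4\sqrt2)\log T/\Delta_a^2$. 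Multiplying by $\Delta_a$ and summing over $a$ gives \eqref{eq:regretB-gapdep}.

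The step I expect to be the real work is twofold. Conceptually, the genuinely new ingredient is the coordination invariant: I must verify carefully that identical statistics plus the fixed tie-break make the decentralized and centralized trajectories coincide for \emph{all} $t$, since this is exactly what lets the single-player machinery apply despite action information asymmetry. Technically, the delicate part is landing on the precise constant: with a time-varying confidence $\delta=1/t^2$ the naive union bound over sample counts threatens to leave an extra $\log T$ factor, so the split of $\Delta_a$ between the confidence radius and the empirical deviation must be tuned (this is the origin of the $\sqrt2$, from balancing the residual tail so it remains summable to a constant rather than contributing to the $\log T$ coefficient). Securing both the $3\sum_a\Delta_a$ term and the $(6+4\sqrt2)$ coefficient therefore requires care in the summation rather than any new idea.
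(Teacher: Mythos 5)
Your proposal is correct and follows essentially the same route as the paper: reduce to a centralized UCB$_1$ over the product arm space via the shared-index/tie-breaking coordination argument, apply the regret decomposition $R_T=\sum_a\Delta_a\mathbb{E}[n_a(T)]$, and bound $\mathbb{E}[n_a(T)]$ by a good-event argument with threshold $m_a=\lceil 4\log T/((1-c)^2\Delta_a^2)\rceil$ optimized at $c=\sqrt2-1$, which is exactly your $u_a=\lceil(6+4\sqrt2)\log T/\Delta_a^2\rceil$. The only cosmetic differences are that you state the coordination invariant inside the proof (the paper argues it when describing the algorithm) and you split the pull count directly rather than via an indicator on the good event; the constants and the balancing that produces the $\sqrt2$ coincide.
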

The proof can be found in Section \ref{sec:probAproofs}. Note that we can anticipate a fundamental lower bound for problem A to be at least $O(\log T)$ (since that is a fundamental lower bound for the single player MAB problem). The upper bound actually matches this. Thus, we can conclude that $O(\log T)$ is an achievable lower bound for problem A.

We next present a gap-independent upper bound on expected regret for problem A. 

\begin{theorem}\label{thm:problemA-gapindep}
	If each player uses the algorithm \texttt{mUCB} in the setting of Problem A, and the rewards are IID,  then the (gap-independent) expected regret  $R_T$ is upper bounded as:	
	\begin{equation}
		R_T \leq 3K_1\cdots K_M + \left(1+\sum_{a: \Delta_{a} > \epsilon}\left(6+4\sqrt{2}\right)\right)\sqrt{T \log T}. 
	\end{equation}
\end{theorem}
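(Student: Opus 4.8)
The plan is to derive this gap-independent bound directly from the gap-dependent bound of Theorem \ref{thm:problemA} via the standard small-gap/large-gap thresholding argument. The starting point is the regret decomposition $R_T = \sum_a \Delta_a\, \mathbb{E}[n_a(T)]$, which holds here exactly as in the centralized single-player case: in Problem A all players receive the common reward and, thanks to the initial coordinated exploration, compute identical indices, so the per-player regret is just the expected reward lost summed over all suboptimal arm-tuples. From the analysis underlying Theorem \ref{thm:problemA} I would extract the per-tuple bound $\mathbb{E}[n_a(T)] \le 3 + (6+4\sqrt{2})\log T/\Delta_a^2$, equivalently $\Delta_a\, \mathbb{E}[n_a(T)] \le 3\Delta_a + (6+4\sqrt{2})\log T/\Delta_a$, which is precisely the quantity that is summed to produce the gap-dependent statement.

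Next I would fix a threshold $\epsilon > 0$ and split the arm-tuples into those with $\Delta_a \le \epsilon$ and those with $\Delta_a > \epsilon$. For the small-gap tuples I bound crudely $\sum_{a:\Delta_a\le\epsilon}\Delta_a\, \mathbb{E}[n_a(T)] \le \epsilon \sum_a \mathbb{E}[n_a(T)] = \epsilon T$, using that the total number of pulls over $T$ rounds is exactly $T$. For the large-gap tuples I apply the per-tuple bound: the constant terms contribute $3\sum_{a:\Delta_a>\epsilon}\Delta_a \le 3\sum_a \Delta_a \le 3 K_1\cdots K_M$ (since each $\Delta_a \le 1$ and there are $K_1\cdots K_M$ tuples), while each logarithmic term satisfies $(6+4\sqrt{2})\log T/\Delta_a < (6+4\sqrt{2})\log T/\epsilon$ because $\Delta_a > \epsilon$. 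Collecting everything yields $R_T \le 3 K_1\cdots K_M + \epsilon T + \left(\sum_{a:\Delta_a>\epsilon}(6+4\sqrt{2})\right)\log T/\epsilon$.

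The final step is to choose $\epsilon$ so as to balance the two $T$-dependent terms. Taking $\epsilon = \sqrt{\log T / T}$ makes $\epsilon T = \sqrt{T\log T}$ and $\log T/\epsilon = \sqrt{T\log T}$ simultaneously, so both terms become multiples of $\sqrt{T\log T}$ and combine into the coefficient $\left(1 + \sum_{a:\Delta_a>\epsilon}(6+4\sqrt{2})\right)$, which is exactly the claimed bound.

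I expect the only genuinely delicate point to be the interplay between the threshold $\epsilon$ and the summation set $\{a : \Delta_a > \epsilon\}$: since this index set itself depends on $\epsilon$, the balancing of $\epsilon T$ against $(6+4\sqrt{2})\log T/\epsilon$ per large-gap tuple is not a clean closed-form optimization, and the statement implicitly fixes $\epsilon = \sqrt{\log T/T}$ rather than optimizing over $\epsilon$. I would make this choice explicit at the outset. Everything else—the regret decomposition and the per-tuple pull bound—is inherited from Theorem \ref{thm:problemA}, so the remaining work is just bookkeeping of the constants and checking that $\epsilon \le 1$ over the range of $T$ of interest.
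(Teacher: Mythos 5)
Your proposal is correct and follows essentially the same route as the paper's proof: both start from the regret decomposition, split the arm-tuples at a threshold $\epsilon$, bound the small-gap contribution by $\epsilon T$, apply the per-tuple bound $\Delta_a \mathbb{E}[n_a(T)] \le 3\Delta_a + (6+4\sqrt{2})\log T/\Delta_a$ from the Theorem~\ref{thm:problemA} analysis to the large-gap tuples, and finally set $\epsilon = \sqrt{\log T/T}$ to balance the two $\sqrt{T\log T}$ terms. Your added remarks about bounding $3\sum_a \Delta_a$ by $3K_1\cdots K_M$ and the $\epsilon$-dependence of the summation set are consistent with (and slightly more explicit than) what the paper does.
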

The proof can be found in Appendix \ref{sec:appendprobAproofs}. This also matches the known gap-independent fundamental lower bounds on expected regret for the single player MAB problem (and hence also the multi-player MAB problem A). Next, we present regret bounds for the Markovian rewards case. 

\begin{theorem}\label{thm:problemA-markov}
	If each player uses the algorithm \texttt{mUCB} in the setting of Problem A, and the rewards are Markovian,  then the expected regret  $R_T$ is upper bounded as:	
	\begin{equation}
		R_T \leq \sum_{a} \Delta_a\left(2C' + \alpha \log T \right) = O\left(\log T \right)
	\end{equation}
	for some universal constant $C'$. 
\end{theorem}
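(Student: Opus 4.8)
The plan is to mirror the structure of the proof of Theorem~\ref{thm:problemA}, replacing the sub-Gaussian concentration step by a concentration inequality for the partial sums of an ergodic Markov chain. First I would observe that, exactly as in the IID case, the coordinated initial exploration together with the common reward signal means that every player computes identical indices $\eta^i_a(t)$ and pulls the matching components of the same joint arm-tuple; hence the multi-player system evolves precisely like a single centralized UCB learner over the joint arm space $\{a\}$. This lets me write the per-player regret as the standard decomposition
\begin{equation}
  R_T = \sum_{a:\,\Delta_a>0} \Delta_a\, \mathbb{E}[n_a(T)],
\end{equation}
so that it suffices to bound $\mathbb{E}[n_a(T)]$ for each suboptimal tuple $a$ by $2C' + \alpha \log T$.

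Next I would carry out the usual index-crossing argument. A suboptimal tuple $a$ is selected at round $t+1$ only if $\eta_a(t) \ge \eta_{a^*}(t)$, and with $\delta = 1/t^2$ this forces at least one of two events: the empirical mean of $a^*$ lies below $\mu^* - \sqrt{2\log(1/\delta)/n_{a^*}(t)}$, or the empirical mean of $a$ exceeds $\mu_a + \sqrt{2\log(1/\delta)/n_a(t)}$ while $n_a(t)$ already exceeds a threshold $\ell$. Choosing $\ell = \alpha \log T$ with $\alpha$ large enough (so that the confidence width at $n_a = \ell$ falls below $\Delta_a/2$, which makes $\alpha$ scale like $1/\Delta_a^2$ up to mixing constants) isolates the genuinely ``bad'' concentration events, giving
\begin{equation}
  \mathbb{E}[n_a(T)] \le \ell + \sum_{t} \mathbb{P}\bigl(\text{empirical mean deviates by } \Delta_a/2\bigr).
\end{equation}

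The crux, and the step that genuinely differs from the IID proof, is bounding these deviation probabilities when the $n$ observations of tuple $a$ are not independent but are consecutive states of a \emph{rested} Markov chain (the chain advances only when $a$ is played). Here I would invoke a Chernoff-type bound for ergodic Markov chains (of Gillman/Lezaud type, or equivalently the large-deviation bound of Anantharam--Varaiya--Walrand), yielding $\mathbb{P}(|\hat\mu_a(n) - \mu_a| > \epsilon) \le C_1\, e^{-n\,\rho(\epsilon)}$ for a rate $\rho(\epsilon)>0$ governed by the spectral gap of the chain, where $\mu_a$ is the \emph{stationary} mean. A secondary but necessary wrinkle is the initial transient: because the chain need not start in stationarity, $\mathbb{E}[\hat\mu_a(n)]$ differs from $\mu_a$ by a term decaying geometrically in $n$, whose sum over all $n$ converges to a constant that I would absorb into $C'$. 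Using a peeling/union bound over the random value of $n_a(t)$, the two types of exponential tails each sum to a constant $C'$, producing the $2C'$ in the statement, while the threshold contributes the $\alpha\log T$ term; combining gives $\mathbb{E}[n_a(T)] \le 2C' + \alpha \log T$ and hence the claimed bound after substitution into the decomposition.

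I expect the main obstacle to be exactly this Markov-chain concentration together with the transient-bias control: in the IID setting Hoeffding applies verbatim to independent samples, whereas here the dependence forces the weaker exponential rate $\rho(\epsilon)$ and an additional burn-in term, and one must verify that the constants $C'$ and $\alpha$ depend only on the chain's mixing properties (spectral gap, minimal stationary probability) and not on $T$. The delicate accounting is ensuring that the peeling argument yields a \emph{finite} sum of geometric tails rather than an extra $\log T$ factor, since that is what distinguishes an $O(\log T)$ bound from an $O(\log^2 T)$ one.
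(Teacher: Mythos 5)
Your proposal follows essentially the same route as the paper: the regret decomposition $R_T = \sum_a \Delta_a \mathbb{E}[n_a(T)]$, the same good-event/index-crossing argument with a threshold of order $\alpha\log T$ on $n_a$, and the replacement of the sub-Gaussian tail bound by Lezaud's Chernoff bound for ergodic Markov chains (the paper's Theorem~\ref{thm:lezaud98}), whose initial-distribution factor $N_\lambda$ plays the role of your transient/burn-in term and is absorbed into $C'$. The paper then picks $\delta = T^{-\alpha}$ and $m_a = \log(1/\delta)$ and tunes $\alpha$ against the mixing constants exactly as you describe, so no substantive difference remains.
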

The proof can be found in Appendix  \ref{sec:appendprobAproofs}.

\subsection{Problem B': IID  Rewards}\label{sec:algo-probB} 

We now consider a different type of information asymmetry, namely actions of other players are observable but each gets an IID copy of the rewards. As stated before, this is a special case of Problem B. The  UCB (Upper Confidence Bound) index to be used by each player in this setting is the same as given in \eqref{index:mucb}.
Note that since actions of other players can be observed, each player can compute the index $\eta^i_a$ for each arm-tuple $a$ though unlike in problem A, these indices will have different values since the rewards are different (and a player knows only its own rewards). The algorithm we use for problem B is still the \texttt{UCB} algorithm given as Algorithm \ref{algo:mucb} except that the actions are directly observable.



\begin{remark}
	(i) We note that the index \eqref{index:mucb} and Algorithm \ref{algo:mucb} are used for both cases of IID and Markovian rewards though the regret bounds will differ as will see.
	(ii) While each player can only observe actions of others, their rewards are different (and IID). The indices they compute will also be different. Thus, unlike in Problem A, they are not able to coordinate their actions which makes problem B much more challenging.
\end{remark}

The following result shows that \texttt{mUCB} no longer produces $O(\log(T))$ regret for this problem.

\begin{theorem}\label{thm:problemB}
	If each player uses the algorithm \texttt{mUCB} in the setting of Problem B, and the rewards are IID,  then there exists a multiplayer multi armed bandit setting such that this algorithm gives linear regret. 
\end{theorem}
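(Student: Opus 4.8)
The plan is to exhibit a two-player, two-arm-each instance on which the coordination mechanism that makes \texttt{mUCB} succeed in Problem A provably breaks down once the rewards become private. I would take $M=2$, $K_1=K_2=2$, and place two optimal arm-tuples on the ``anti-diagonal'': choose the reward distributions so that $\mu_{(1,2)}=\mu_{(2,1)}=\mu_g$ and $\mu_{(1,1)}=\mu_{(2,2)}=\mu_b$ with $\mu_g>\mu_b$ (concretely, say $\mu_g=3/4,\ \mu_b=1/4$), using genuinely random, \emph{identically distributed} $1$-subgaussian rewards on $[0,1]$ for the two good tuples. Thus there are two optimal tuples, $\mu^*=\mu_g$, and every pull of $(1,1)$ or $(2,2)$ costs regret $\Delta:=\mu_g-\mu_b>0$. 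This is the reward-information-asymmetry setting of Problem B' (observable actions, private IID rewards), and since that is a special case of Problem B the conclusion transfers.

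First I would reduce each player's action rule to a single comparison. For $t>K_{\max}$ player $1$ plays arm $1$ iff $\max(\eta^1_{(1,1)},\eta^1_{(1,2)})\ge\max(\eta^1_{(2,1)},\eta^1_{(2,2)})$, and symmetrically player $2$ plays arm $1$ iff $\max(\eta^2_{(1,1)},\eta^2_{(2,1)})\ge\max(\eta^2_{(1,2)},\eta^2_{(2,2)})$. On a high-probability ``good event'' where every empirical mean sits within its confidence radius, the two bad tuples have indices bounded away below $\mu_g$, so each player's choice is governed entirely by the comparison of the two good-tuple indices: player $1$ effectively ``aims at'' $(1,2)$ when $\eta^1_{(1,2)}\ge\eta^1_{(2,1)}$ and at $(2,1)$ otherwise, and likewise for player $2$ (with $(2,1)$ corresponding to its arm $1$). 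Enumerating the four combinations shows the realized tuple is optimal exactly when both players aim at the \emph{same} good tuple, and is a suboptimal tuple $(1,1)$ or $(2,2)$ whenever they aim at different ones. The key point is that the algorithm, though it observes actions, never uses them to override its private-index argmax, so observability does not repair the disagreement.

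The heart of the argument is then a constant lower bound on this disagreement probability that does not decay in $t$. Because the two good tuples have identical laws and the two players draw independent private reward streams, each player's preference between $(1,2)$ and $(2,1)$ is the sign of a (shrinking but) mean-zero fluctuation; by the exchange symmetry of the instance under swapping both players' arm labels this sign is essentially a fair coin, and the two players' coins are driven by independent randomness, so disagreement should have probability bounded below by a constant $c>0$. I would formalize this by conditioning on the realized pull schedule and invoking exchangeability of each player's good-tuple reward blocks together with the good event to conclude $\Pr[a(t)\in\{(1,1),(2,2)\}]\ge c$ for all large $t$. Summing then gives $R_T\ge c\,\Delta\,(T-K_{\max})=\Omega(T)$, the claimed linear regret.

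The main obstacle I anticipate is exactly this last step. Since the action at time $t$ is a deterministic function of the history, the ``coin flips'' are not literally independent fair coins but signs of empirical-mean differences produced by adaptive, history-dependent sampling, and the two players' estimates are coupled through the common random pull counts $n_a(t)$. Making the constant-probability-of-disagreement claim rigorous therefore requires carefully combining the relabeling symmetry of the instance with a concentration argument that controls the denominators and confidence radii, so that the symmetry is not destroyed by unequal sampling of the two good tuples or by the tie-breaking rule of Definition~\ref{def:order}. That is where I would spend most of the effort; the reduction to a single comparison and the final regret summation are comparatively routine.
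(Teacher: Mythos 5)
Your instance and your diagnosis of the failure mode (the two players privately disagree about which of two comparably good tuples to aim at, and the resulting action profile lands on a bad tuple) are on the right track, but the proof as proposed has a genuine gap at exactly the step you flag: you need $\Pr[a(t)\in\{(1,1),(2,2)\}]\ge c>0$ \emph{uniformly in $t$}, and the symmetry/exchangeability sketch does not deliver this. Marginal symmetry only says each player is equally likely to prefer either good tuple; it does not rule out that the two players' preferences become almost surely aligned (or that the process is absorbed into agreement) after finitely many rounds, in which case regret would be sublinear. Since the sampling is adaptive and the pull counts $n_a(t)$ couple the two players' index comparisons, there is no clean way to condition on the schedule and treat the two signs as independent fair coins; this is the entire content of the theorem and it is left unproven.

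The paper avoids this difficulty with a much more elementary \emph{absorption} argument that your proposal misses. In \texttt{mUCB} the index $\eta^i_a$ of a tuple is only updated when that tuple is actually pulled, so one defines a single ``bad'' event $B$ occurring right after the $K_{\max}$ initialization rounds: player $1$'s private samples make $(2,1)$ look best to her while player $2$'s private samples make $(1,2)$ look best to him. On $B$ both players select their arm $2$, the realized tuple is $(2,2)$, and a short case analysis (four cases on how the updated $\eta^i_{(2,2)}$ compares to $\eta^i_{(2,1)}$, resp.\ $\eta^i_{(1,2)}$) shows that each player's row/column maximum containing its preferred tuple never decreases while the other rows' indices are frozen, so the players are locked on $(2,2)$ for all remaining rounds. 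Then $R_T\ge P(B)(T-4)\Delta_{(2,2)}$ with $P(B)>0$ a fixed constant (guaranteed by choosing overlapping uniform reward distributions), which is linear. The lesson is that you do not need a per-round disagreement bound at all: one positive-probability event at the start of the exploitation phase, combined with the fact that the algorithm never revisits unpulled tuples once locked in, suffices. To repair your proof, replace the uniform-in-$t$ disagreement claim with this lock-in argument (your symmetric instance would also work for it), or else supply a genuinely new argument for the uniform lower bound, which would be considerably harder.
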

The proof can be found in Section \ref{sec:probBproofs}. 

The above negative result motivates us to consider a different algorithm for Problem B (which is a generalization of Problem B'). 

\subsection{Problem B: Unobserved Actions and Independent (IID) rewards}\label{sec:algo-probC}

We introduce the \texttt{DSEE} algorithm inspired by \cite{vakili2013deterministic} for Problem B which gives $O(K(T)\log(T))$ regret for some monotonic $K(T)$ going to infinity. This algorithm is nearly optimal, and gives the same regret for Problem B.   We consider the setting where there is two types of information asymmetry between the players: actions of others are unobserved and the rewards are independent. This problem is a lot more challenging than problem A, and variation of the \texttt{mUCB} algorithm does not work. Thus, we provide the \texttt{DSEE} algorithm inspired by \cite{vakili2013deterministic} for the setting in problem C.

Let the reward for player $i$ at time $t$ on playing arm $a_i(t)$ while the other players play arms $a_{-i}(t)$ be denoted by $X^i_{a(t)}(t)$. Then, the sample mean at time $t$ of rewards by playing arm-tuple $a$ for player $i$ is given by $\mu_a^i(t)$. 

\begin{algorithm}[H]\label{algo:DSEE}
	\caption{\texttt{mDSEE} Algorithm}
	\begin{algorithmic}[1]
		\State \textbf{Input:} $(\mathcal{K}_1,\cdots,\mathcal{K}_M)$. 
		
		\State Pick a monotonic function $K(\lambda): \mathbb{N} \rightarrow \mathbb{N}$ such that $\lim_{t \rightarrow \infty}K(\lambda) = \infty$. 
		
		\State First let $\lambda = 1$
		
		\State Player $P_i$ will start from his arm $1$ and successively pull each arm $K(\lambda)K_{i+1}\cdots K_M$ before moving to the next arm. He will repeat this entire epoch $K_1\cdots K_{i-1}$ times.
		
		\State Player $P_i$ will calculate the sample mean $\mu_j^i(t)$  of the rewards they see for each $M$-tuple $a$ of arms. 
		
		\State Player $P_i$ will choose the $M$-tuple arm with the highest sample mean and commit to his corresponding arm for up until the next power of $2$. In case of a tie, pick randomly. 
		
		\State When $T = 2^n$ for some $n \geq \lfloor\log_2(K(1)K_1\cdots K_M)\rfloor+1$, Player $P_i$ with repeat steps (3) - (5), incrementing $\lambda$ by $1$. 
	\end{algorithmic}
\end{algorithm}
Note that since Problem B' is a special case of Problem B, this algorithm can be also be used for Problem B'. 

The players have an  exploration phase wherein each arm $a$ will be pulled $K(1)$ times before moving on to the next one. After all the arms have been pulled $K$ times they will pick the arm with the highest average reward and commit until the next power of $2$. $K(t)$ is chosen to be a function that goes to $\infty$ to compensate for the expoentially increasing intervals of commitment to a single arm. This process is rigorously justified in the proof of Theorem \ref{thm:problemC}. Since the rewards are independently generated for each player, it is possible that each player obtain a different $M$-tuple optimal arm. However, the probability of this happening is lows since $K(\lambda)$ goes to $\infty$. Since we are exploring at powers of $2$, the regret is $O(K(\lfloor\log_2(T)\rfloor)\log(T)$.

\begin{remark}
	The slower $K(\lambda)$ grows, the better the regret. 
\end{remark}

The \texttt{mDSEE} algorithm achieves the following (gap-independent) expected regret on Problem B (and B').
\begin{theorem}\label{thm:problemC}
	If each player uses the \texttt{mDSEE} algorithm in the setting of Problem B (and B'), and the rewards are IID,  then the (gap-independent) expected regret  $R_T^i$  satisfies $O(K(\lfloor\log_2(T)\rfloor\log(T))$.
\end{theorem}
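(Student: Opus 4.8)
The plan is to split the per-player regret into what is lost during the exploration sub-phases and what is lost during the commitment sub-phases, and to show the exploration term is $O(K(\lfloor\log_2 T\rfloor)\log T)$ while the commitment term is only $O(1)$. First I would fix notation for the epoch structure: since a fresh epoch begins at every power of two beyond the threshold $\lfloor\log_2(K(1)K_1\cdots K_M)\rfloor+1$, the number of epochs reached by time $T$ is $\lambda_{\max}=O(\log_2 T)$, and the commitment interval of epoch $\lambda$ has length $\Theta(2^{\lambda})$. Because the realized regret is measured against the common means $\mu_a$, the expected regret $R_T^i=\mathbb{E}[\sum_t\Delta_{\tilde a(t)}]$ is the same for every player, where $\tilde a(t)$ denotes the arm-tuple actually played.

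Bounding the exploration term is routine. In epoch $\lambda$ each of the $K_1\cdots K_M$ arm-tuples is pulled $K(\lambda)$ times, so the number of exploration rounds is $\sum_{\lambda=1}^{\lambda_{\max}}K(\lambda)K_1\cdots K_M$; monotonicity of $K$ gives $\sum_{\lambda=1}^{\lambda_{\max}}K(\lambda)\le\lambda_{\max}K(\lambda_{\max})$, and since each such round costs at most $\Delta_{\max}\le1$, the exploration regret is at most $K_1\cdots K_M\,\lambda_{\max}K(\lambda_{\max})=O(K(\lfloor\log_2 T\rfloor)\log T)$, which is already the claimed leading term.

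For the commitment term I would argue that miscoordination becomes rare as samples accumulate. Let $N_\lambda$ be the total number of pulls of each arm-tuple that a player has gathered through the exploration phases of epochs $1,\dots,\lambda$, so $N_\lambda=\sum_{\ell\le\lambda}K(\ell)$, and let $\hat a^i$ be the tuple player $i$ commits to. The played tuple $\tilde a=(\hat a^1_1,\dots,\hat a^M_M)$ equals $a^*$ whenever every player selects $a^*$, so a union bound over the $M$ players and over the suboptimal tuples, together with the concentration of $\hat\mu^i_a-\hat\mu^i_{a^*}$ — a $\sqrt{2/N_\lambda}$-subgaussian variable with mean $-\Delta_a$ — yields a miscoordination probability $p_\lambda\le M\sum_{a:\Delta_a>0}\exp(-N_\lambda\Delta_a^2/4)$. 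Bounding each bad commitment round by $1$, the commitment regret is at most $\sum_\lambda p_\lambda\,\Theta(2^\lambda)$.

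The hard part is showing this last series is $O(1)$ in spite of the geometrically growing commitment lengths $2^\lambda$ — this is exactly where the hypothesis $K(\lambda)\to\infty$ is used. Since $K(\lambda)\to\infty$, the Ces\`aro average $N_\lambda/\lambda\to\infty$, so for each fixed suboptimal $a$ there is a threshold beyond which $N_\lambda\Delta_a^2/4\ge 2\lambda\log2$; past it the summand $\exp(-N_\lambda\Delta_a^2/4)2^\lambda$ is bounded by $2^{-\lambda}$ and hence geometrically summable, while only finitely many earlier terms contribute. Summing the resulting finite constant over the finitely many arm-tuples bounds the commitment regret by a (gap-dependent) constant, i.e.\ $O(1)$. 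The delicate points I expect to have to handle carefully are that the played tuple couples all $M$ players' independent estimates — so the failure event must be controlled through a union bound over players rather than for a single estimator — and that superlinear growth of $N_\lambda$ really is enough to defeat the $2^\lambda$ factor for every gap profile, however slowly $K$ grows. Combining the two terms gives $R_T^i=O(K(\lfloor\log_2 T\rfloor)\log T)$.
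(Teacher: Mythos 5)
Your proposal is correct and follows essentially the same route as the paper's proof: the same exploration/commitment decomposition, the same Ces\`aro-average argument showing that the accumulated sample count per arm-tuple grows superlinearly in the epoch index because $K(\lambda)\to\infty$, and the same conclusion that this superlinear growth makes the commitment regret summable to a constant despite the geometrically growing commitment intervals. The only cosmetic differences are that you sum the failure probability epoch-by-epoch with weight $2^{\lambda}$ while the paper sums round-by-round, and you use concentration of the pairwise difference $\hat{\mu}^i_a-\hat{\mu}^i_{a^*}$ where the paper uses an $\epsilon$-neighborhood event with $\epsilon<\tfrac12\min_{a:\Delta_a>0}\Delta_a$; neither changes the substance.
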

The proof can be found in Section \ref{sec:probCproofs}.
	\section{Regret Analysis: Proofs of Theorems}\label{sec:analysis}

We need Lemma \ref{corollary5.5} for subgaussian random variables reproduced in Appendix \ref{sec:appendix}. We first present the regret decomposition lemma.
\begin{lemma}
	With regret defined in \eqref{eq:regret}, we have the following regret decomposition for each player:
	\begin{equation}\label{eq:regret_decomp}
		R_T^i = \sum_{a} \Delta_{a}\mathbb{E}\left[n_a(T)\right],
	\end{equation}
	where $n_a(T)$ is the number of times the arm $a$ has been pulled up to round $T$.
	\label{lem:regret_decomp}
\end{lemma}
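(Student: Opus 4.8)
The plan is to unpack the definition of per-player regret in \eqref{eq:regret} and convert the sum over time steps into a sum over arm-tuples; this is the standard regret decomposition adapted to the multi-player joint action $a(t)=(a_1(t),\dots,a_M(t))$. First I would record the deterministic bookkeeping identity $\sum_a n_a(T)=T$, which holds because at every one of the $T$ rounds exactly one arm-tuple is pulled. This immediately rewrites the benchmark term as $T\mu^* = \mu^*\sum_a n_a(T)$, so that inside the expectation the leading contribution becomes $\sum_a \mu^*\,\mathbb{E}[n_a(T)]$.

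For the realized-reward term I would insert indicators over which arm-tuple is active and interchange the (finite) sums, writing
\begin{equation*}
	\sum_{t=1}^T X^i_{a(t)}(t) = \sum_a \sum_{t=1}^T \mathbf{1}\{a(t)=a\}\, X^i_a(t).
\end{equation*}
The crucial step is to take expectations and condition on the past. Let $\mathcal{F}_{t-1}$ be the $\sigma$-algebra generated by all actions and rewards available before round $t$. Under any of the algorithms considered, $a(t)$ is $\mathcal{F}_{t-1}$-measurable (the choice uses only past observations and the agreed-upon tie-breaking of Definition \ref{def:order}), while in the IID model the fresh draw $X^i_a(t)$ is independent of $\mathcal{F}_{t-1}$ with mean $\mu_a$. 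Hence on the event $\{a(t)=a\}$ we have $\mathbb{E}[X^i_{a(t)}(t)\mid \mathcal{F}_{t-1}]=\mu_{a}$, and the tower property gives $\mathbb{E}[\mathbf{1}\{a(t)=a\}X^i_a(t)] = \mu_a\,\mathbb{P}(a(t)=a)$. Summing over $t$ and recognizing $\sum_{t=1}^T \mathbb{P}(a(t)=a)=\mathbb{E}[n_a(T)]$ yields $\mathbb{E}[\sum_{t=1}^T X^i_{a(t)}(t)] = \sum_a \mu_a\,\mathbb{E}[n_a(T)]$.

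Subtracting the two expressions gives $R_T^i = \sum_a(\mu^*-\mu_a)\,\mathbb{E}[n_a(T)] = \sum_a \Delta_a\,\mathbb{E}[n_a(T)]$ by the definition $\Delta_a=\mu^*-\mu_a$, as claimed; note that the player index $i$ drops out of the final bound because every $X^i_a$ shares the common mean $\mu_a$. The main obstacle is justifying the conditional-mean identity $\mathbb{E}[X^i_{a(t)}(t)\mid \mathcal{F}_{t-1}]=\mu_a$ on $\{a(t)=a\}$: it relies on the action being predictable with respect to $\mathcal{F}_{t-1}$ and on the reward being independent of that history, which is precisely where the IID assumption enters. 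For the Markovian reward model this step-by-step identity fails, since the reward drawn when $a$ is played depends on the number of previous pulls of $a$ through the state of its Markov chain; there $\mu_a$ must be read as the stationary mean and the equality holds only in an averaged sense, so the same decomposition serves as the entry point for the more delicate Markovian analysis rather than as an exact identity.
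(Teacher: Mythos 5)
Your proposal is correct and follows essentially the same route as the paper's proof: both insert indicators $I\{a(t)=a\}$, interchange the sums over $t$ and $a$, and use conditioning (you on $\mathcal{F}_{t-1}$, the paper directly on $a(t)$) to replace the realized reward by its mean $\mu_a$ on the event $\{a(t)=a\}$, yielding $\sum_a \Delta_a\,\mathbb{E}[n_a(T)]$. Your explicit treatment of the benchmark term via $\sum_a n_a(T)=T$ and your caveat about the Markovian case are minor presentational additions, not a different argument.
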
 	
The proof can be found in Appendix \ref{sec:appendix}.

\subsection{Problem A: Unobserved Actions, Common Rewards}\label{sec:probAproofs}

Now, we present proofs of theorems for problem A.

\begin{proof} (Theorem \ref{thm:problemA})
	We suppose that the first arm is the optimal one for each player, that is arm $\left(1,...,1\right)$ has the highest average reward. For any arm $a \in \times_{i=1}^M\mathcal{K}_i$, we define the following ``good" event:
	\begin{equation}\label{eq_good}
		G_{a} =  \left\{\mu_{\left(1,...,1\right)} < \min_{t \in [n]} \eta^i_{\left(1,...,1\right)}\left(t\right)\right\} \bigcap \left\{\hat{\mu}_{a}\left(\cdot, m_a\right) + \sqrt{\frac{2}{m_a}\log\left(\frac{1}{\delta}\right)} < \mu_{\left(1,...,1\right)} \right\},
	\end{equation}
	The set above is the event that the common average reward $\mu_{\left(1,...,1\right)}$ from arm $(1,...,1)$ is not underestimated by the UCB index, and that the upper confidence bound of arm $a$ is below $\mu_{\left(1,...,1\right)}$ after $m_a$ observations. The constant $m_a$ will be chosen later. We show that the regret whether $G_a$ occurs or not is small. Letting $I$ be the indicator function we obtain:
	\begin{equation}\label{3}
		\mathbb{E}\left[n_a(T)\right] = \mathbb{E}
		\left[I\left\{ G_{a}\right\}n_a(T)\right] + \mathbb{E}\left[I\left\{  G_{a}^c\right\}n_a(T)\right].
	\end{equation}
	
	We need the following lemma whose proof can be found in Appendix \ref{sec:appendprobAproofs}.
	
	\begin{lemma}\label{lem:pulls}
		When $G_a$ holds, $n_a(T) \leq m_a$. 
	\end{lemma}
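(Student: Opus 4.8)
The plan is to argue by contradiction, leveraging the defining property of the \texttt{mUCB} selection rule: at every round $t > K_{\max}$ the algorithm pulls the arm-tuple of maximal index, so whenever tuple $a$ is the one selected at time $t$ we must have $\eta^i_a(t) \ge \eta^i_{(1,\dots,1)}(t)$. Everything then reduces to showing that, on the event $G_a$, the index of $a$ can never overtake the index of the optimal tuple once $a$ has accumulated $m_a$ pulls.

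First I would suppose for contradiction that $n_a(T) > m_a$, so that $a$ is pulled at least $m_a+1$ times over the horizon, and let $\tau$ be the round at which $a$ is pulled for the $(m_a+1)$-th time. By the definition of the counter, immediately before this pull $a$ has been selected exactly $m_a$ times, i.e. $n_a(\tau) = m_a$, so the empirical mean entering the index at time $\tau$ is precisely the average of the first $m_a$ rewards from $a$ — exactly the quantity $\hat{\mu}_{a}(\cdot, m_a)$ appearing in the second event of $G_a$. Evaluating the index of $a$ at $\tau$ then gives
$$\eta^i_a(\tau) = \hat{\mu}_{a}(\cdot, m_a) + \sqrt{\frac{2\log(1/\delta)}{m_a}} < \mu_{(1,\dots,1)},$$
where the strict inequality is the second clause of $G_a$. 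Meanwhile the first clause of $G_a$ guarantees $\eta^i_{(1,\dots,1)}(\tau) > \mu_{(1,\dots,1)}$, and chaining these yields $\eta^i_a(\tau) < \mu_{(1,\dots,1)} < \eta^i_{(1,\dots,1)}(\tau)$. Since the optimal tuple strictly dominates $a$ in index at $\tau$ (and strictness means no tie-breaking via Definition \ref{def:order} is ever invoked), the algorithm could not have selected $a$ at $\tau$, contradicting the choice of $\tau$. Hence $n_a(T) \le m_a$.

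The one point that needs care is the bookkeeping in the second step: I must confirm that the confidence radius and empirical mean appearing in the index at the $(m_a+1)$-th pull are exactly those pinned down inside $G_a$, namely that the width is built from the count $m_a$ and the mean from the first $m_a$ samples. This alignment is clean in Problem A because the reward attached to the $s$-th pull of a fixed tuple is a single fixed draw, common to all players, so $\hat{\mu}_{a}(\cdot, m_a)$ is unambiguous and identical for every player. Once this is established, the entire argument is deterministic on the event $G_a$ and requires no further probabilistic input — all the stochasticity has already been absorbed into the definition of $G_a$ itself, whose probability will be controlled separately via the subgaussian concentration of Lemma \ref{corollary5.5} when bounding $\mathbb{E}[I\{G_a^c\} n_a(T)]$.
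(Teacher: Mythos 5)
Your proof is correct and follows essentially the same route as the paper's: a contradiction at the round of the $(m_a+1)$-th pull of $a$, where the two clauses of $G_a$ chain to give $\eta_a < \mu_{(1,\dots,1)} < \eta_{(1,\dots,1)}$, contradicting the index-maximizing selection rule. The only difference is cosmetic indexing ($n_a(\tau)=m_a$ versus the paper's $n_a(t-1)=m_a$), plus your welcome explicit remarks that strictness obviates tie-breaking and that the argument is deterministic on $G_a$.
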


	Thus, using equation \eqref{3} we have the bound:
	\begin{equation}\label{eq:expectation}
		\mathbb{E}\left[n_a(T)\right] \leq m_a+P\left(G_{a}^c\right)n.
	\end{equation}
	
	We will now bound $\mathbb{E}\left[I\left\{  G_{a}^c\right\}n_a(T)\right]$. Since $n_a(T) \leq n$, we have the bound:
	\begin{equation}
		\mathbb{E}\left[I\left\{G_{a}^c\right\}n_a(T)\right] \leq \mathbb{E}\left[I\left\{G_{a}^c\right\}T\right]= nP\left(G_{a}^c\right).
	\end{equation}
	
	We have the following claim for $P\left(G_{a}^c\right)$ whose proof can be found in Appendix \ref{sec:appendprobAproofs}.
	
	\begin{claim}\label{claim:1}
		$P\left(G_{a}^c\right)$ where $G_a$ is as defined as in equation \eqref{eq_good} satisfies:
		\begin{equation}
			P\left(G_a^c\right) \leq e^{-\frac{m_ac^2\Delta_a^2}{2}} + T\delta.
		\end{equation}
		
		where $c$ satisfies $\Delta_a - \sqrt{\frac{2\log\left(1/\delta\right)}{m_a}} \geq c \Delta_a.$
	\end{claim}

	It follows from Claim \ref{claim:1} and equation \eqref{eq:expectation} that 
	\begin{equation}\label{blind.eq4}
		\mathbb{E}\left[n_a(T)\right] \leq m_a + T\left(T\delta +e^{-\frac{m_ac^2\Delta_a^2}{2}}\right).
	\end{equation}
	
	Letting $\delta = \frac{1}{T^2}$, we will select $c$ such that inequality \eqref{blind.eq4} is optimized under the constraints given in Claim \ref{claim:1}. We first set
	$m_a = \left\lceil \frac{4\log(T)}{\left(1-c\right)^2\Delta_{a}^2}\right\rceil$ so that plugging this into inequality \eqref{blind.eq4}, we have 
	\begin{equation}
		\mathbb{E}\left[n_a(T)\right] \leq \left\lceil \frac{4\log(T)}{\left(1-c\right)^2\Delta_{a}^2}\right\rceil + 1+  T^{1-\frac{2c^2}{\left(1-c\right)^2}}.
	\end{equation}
	To minimize regret, we want $1-\frac{2c^2}{\left(1-c\right)^2}\leq 0$ which gives us $c \geq -1+\sqrt{2}$ since $c \in \left(0, 1\right)$. The term $\left\lceil\frac{4\log(T)}{\left(1-c\right)^2\Delta_{a}^2}\right\rceil$ is minimized when $c$ is as small as possible so let us pick $c = -1+ \sqrt{2}$. This gives us the inequality:
	\begin{equation}
		\mathbb{E}\left[n_a(T)\right] \leq \left\lceil \frac{4\log(T)}{\left(2-\sqrt{2}\right)^2\Delta_{a}^2}\right\rceil + 2\leq  \frac{2\left(3+2\sqrt{2}\right)\log(T)}{\Delta_{a}^2} + 3.
	\end{equation}
	Plugging this back into inequality \eqref{eq:regret_decomp} yields the desired result in Theorem \ref{thm:problemA}. 
\end{proof}


\subsection{Problem B': Observed Actions, Independent Rewards}\label{sec:probBproofs}


\begin{proof} \emph{(Theorem \ref{thm:problemB})}
Consider the two player setting where each player has $2$ arms with means as in Figure \ref{fig:counter_ex}. Furthermore, suppose for a square with mean $\mu$, the distribution is uniform across the interval $[\mu - .05, \mu+.05]$. Clearly such distributions are $1$-subgaussian. The row correspond to the arms Player $1$ while the column correspond to the arms Player $2$ can select from. Consider the following 'Bad' event,

\begin{equation}
B:= \{\hat{\mu}^1_{(2, 1)}(t, 1)> \max_{a \neq (2, 1)}\hat{\mu}^1_a(t, 1))\}\cup \{\hat{\mu}^2_{(1, 2)}(t, 1)> \max_{a \neq (1, 2)}\hat{\mu}^2_a(t, 1))\}
\end{equation}

$B$ is the event that after taking $1$ initial sample from arms $(2, 1)$ and $(1, 2)$, Player 1 determines arm $(2, 1)$ as the optimal arm, while Player 2 determines arm $(1, 2)$ as the optimal arm. In this event Player $1$ will pull arm 2 while Player 2 will pull arm 2, thus obtaining a sample from arm $(2, 2)$. In this situation the following cases are possible,
\begin{enumerate}
    \item $\eta^1_{(2, 2)}> \eta^1_{(2, 1)}$ and $\eta^2_{(2, 2)}> \eta^2_{(2, 1)}$. 
    
    \item $\eta^1_{(2, 2)}< \eta^1_{(2, 1)}$ and $\eta^2_{(2, 2)}> \eta^2_{(2, 1)}$
    
    \item $\eta^1_{(2, 2)}> \eta^1_{(2, 1)}$ and $\eta^2_{(2, 2)}< \eta^2_{(2, 1)}$
    
    \item $\eta^1_{(2, 2)}< \eta^1_{(2, 1)}$ and $\eta^2_{(2, 2)}< \eta^2_{(2, 1)}$
\end{enumerate}

In all of these cases the next round Player 1 will still pull arm 2 and Player 2 will still pull arm 2 on their next rounds. Since $\eta^i_a$ stays the same for all $a$ except for $a = (2, 2),$ they will continue to pull arm $(2, 2)$ for the rest of the rounds. It's clear to see from the regret decomposition that 
\begin{equation}
    R_T \geq P(B)\mathbb{E}[T_{(2, 2)}]\Delta_{(2, 2)} = P(B)(T - 4)(.6)
\end{equation}

Since $P(B) > 0$, it follows that $R_T$ is asymptotically linear as desired. 

\begin{figure}
    \centering
    \includegraphics[width = .4\textwidth]{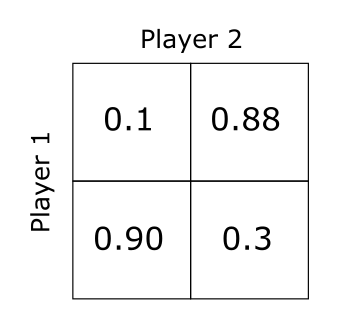}
    \caption{An $2$ player mutiplayer multi-armed stochastic bandit problem such that \texttt{mUCB} gives linear regret.}
    \label{fig:counter_ex}
\end{figure}
\end{proof}

\subsection{Problem B: Unobserved Actions, Independent Rewards}\label{sec:probCproofs}

\begin{proof} \emph{(Theorem \ref{thm:problemC})}
Decompose $R_T = R_{T, E} + R_{T, C}$, where $R_{T,E}$ is the regret incurred from the exploration sequence spaced at powers of $2$, while $R_{T, C}$ is the regret coming from committing to the arm with the highest mean. In the $\lambda$th exploration phase each arm is pulled $K(\lambda)$ times, it follows that $n_a(t) \leq K(\lfloor\log_2(t)\rfloor) \log_2(t)$. Thus, the regret decomposition given by equation \eqref{eq:regret_decomp},
\begin{equation}
    R_{T, E} \leq \sum_a K(\lfloor\log_2(t)\rfloor)\lceil \log_2(T)\rceil\Delta_a 
\end{equation}

The following claim, gives a lower bound for the number of times each arm has been explored up to time $T$
\begin{claim}
$n_a(t) \geq K_0(t)\log_2(t) $ for some function $K_0(t)$ going to infinity.
\begin{proof}
Note here that $K_0(t)$ doesn't have to map positive integers to positive integers. This statement is proved if we can show that $\lim_{t \rightarrow \infty} \frac{\sum_{i=1}^l K(\lambda)}{l} = \infty$. Suppose $\limsup_{t \geq 1}\frac{\sum_{i=1}^l K(\lambda)}{l}  = L < \infty$. Then for any $\epsilon$, there exists an integer $l$ such that $\frac{\sum_{\lambda=1}^lK(\lambda)}{l} > L - \epsilon$. We can increase $l$ so that it satisfies the condition $K(l+1) \geq K(l)+1$. Now consider the quantity $\frac{\sum_{\lambda=1}^{2l}}{2l}$ as follows
\begin{equation}
    \frac{\sum_{\lambda=1}^{2l}K(\lambda)}{2l} = \frac{\sum_{\lambda=1}^{l}(K(\lambda)+ \sum_{\lambda=l+1}^{2l}(K(\lambda)}{2l} \geq \frac{l(L - \epsilon)+ l(K(l)+1)}{2l} = \frac{L+K(l)}{2} + \frac{1 - \epsilon}{2}
\end{equation}

Since $K(l) \geq L-\epsilon$, it follows that  
\begin{equation}
       \frac{\sum_{\lambda=1}^{2l}K(\lambda)}{2l}  \geq L + \frac{1 - 2\epsilon}{2}
\end{equation}

when $\epsilon < \frac12$, we clearly have $\frac{\sum_{\lambda=1}^{2l}K(\lambda)}{2l} > L$. Since $\frac{\sum_{\lambda=1}^{2l}K(\lambda)}{2l}$ is an nondecreasing sequence, the contradiction gives the desired result. 
\end{proof}
\end{claim}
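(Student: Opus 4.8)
The plan is to reduce the claim to a single analytic fact about the schedule $K(\lambda)$ and then prove that fact. By construction of \texttt{mDSEE}, a new exploration phase is triggered only at the powers of $2$, so by round $t$ the algorithm has completed $l = \lfloor \log_2 t\rfloor$ full exploration phases, up to an additive constant coming from the startup offset (phases begin once $t$ exceeds $K(1)K_1\cdots K_M$). In the $\lambda$th phase every arm-tuple $a$ is pulled exactly $K(\lambda)$ times, so the total number of exploratory pulls of $a$ satisfies $n_a(t) \geq \sum_{\lambda=1}^{l} K(\lambda) = l\cdot \tfrac{1}{l}\sum_{\lambda=1}^{l}K(\lambda)$. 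Recalling $l \asymp \log_2 t$, the natural choice is
\[
K_0(t) := \frac{1}{\lfloor\log_2 t\rfloor}\sum_{\lambda=1}^{\lfloor\log_2 t\rfloor} K(\lambda),
\]
after which the bound $n_a(t) \geq K_0(t)\log_2 t$ holds essentially by definition, and the entire content of the claim collapses to showing $K_0(t)\to\infty$.

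The heart of the argument is therefore the statement that the Ces\`aro mean $\tfrac{1}{l}\sum_{\lambda=1}^{l}K(\lambda)$ diverges whenever $K(\lambda)\to\infty$. I would isolate this as a one-line lemma and prove it by tail domination: fix any threshold $N$; since $K(\lambda)\to\infty$ there is a $\lambda_0$ with $K(\lambda)\geq 2N$ for all $\lambda\geq\lambda_0$, so for every $l\geq 2\lambda_0$,
\[
\frac{1}{l}\sum_{\lambda=1}^{l}K(\lambda) \;\geq\; \frac{1}{l}\sum_{\lambda=\lambda_0}^{l}K(\lambda) \;\geq\; \frac{l-\lambda_0+1}{l}\,2N \;\geq\; N,
\]
which shows the average eventually exceeds any $N$ and hence diverges. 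Equivalently one may quote Stolz--Ces\`aro with $a_l=\sum_{\lambda\le l}K(\lambda)$ and $b_l=l$, giving $a_l/b_l\to K(l)\to\infty$ immediately. The author instead argues by contradiction, assuming $\limsup_l \tfrac{1}{l}\sum K = L<\infty$ and splitting $\sum_{\lambda=1}^{2l}$ into two halves to force a strict increase past $L$; all three routes are valid, and I would prefer the tail-domination version as the shortest self-contained one.

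I expect the genuinely delicate point to be the bookkeeping in the reduction rather than the analysis. One must match the discrete phase counter $\lambda$ to $\lfloor\log_2 t\rfloor$ correctly, absorb the finite startup (phases recurring at $t=2^n$ for $n\geq \lfloor\log_2(K(1)K_1\cdots K_M)\rfloor+1$) into constants, and observe that any partial phase in progress at time $t$ only strengthens the lower bound on $n_a(t)$. Once this indexing is pinned down, the divergence of the Ces\`aro mean is standard and short. I would also note that monotonicity of $K$ is convenient but not essential here: the tail-domination bound uses only $K(\lambda)\to\infty$.
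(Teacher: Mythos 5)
Your proposal is correct, and the reduction is the same as the paper's: you lower-bound $n_a(t)$ by the total exploratory pulls $\sum_{\lambda=1}^{\lfloor\log_2 t\rfloor}K(\lambda)$ and take $K_0(t)$ to be the Ces\`aro mean of $K$, so everything rests on showing that the Ces\`aro mean of a sequence tending to infinity also tends to infinity. Where you diverge is in the proof of that analytic fact: the paper argues by contradiction, assuming $\limsup_l \tfrac{1}{l}\sum_{\lambda\le l}K(\lambda)=L<\infty$ and using a doubling trick ($l\mapsto 2l$) together with monotonicity and integer-valuedness of $K$ (to extract an index with $K(l+1)\ge K(l)+1$ and to conclude $K(l)\ge L-\epsilon$) to push the average strictly above $L$; your tail-domination argument is direct, shorter, and strictly more general, since it uses only $K(\lambda)\to\infty$ and nonnegativity, not monotonicity or that $K$ takes integer values. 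It also sidesteps a small gap in the paper's version, namely that exceeding $L$ at a single index does not by itself contradict $\limsup = L$ --- the paper has to patch this by invoking monotonicity of the Ces\`aro averages, whereas your bound holds for \emph{all} $l\ge 2\lambda_0$ and so immediately forces the limit to infinity. Finally, you are more careful than the paper about the bookkeeping between the phase counter $\lambda$ and $\lfloor\log_2 t\rfloor$ (the startup offset and any partial phase in progress), which the paper's proof silently elides; absorbing these into constants as you propose is the right move.
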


 Using equation \eqref{eq:regret_decomp}, and the assumption that $(1,...,1)$ is optimal,
\begin{equation}
    R_{T,C} \leq \sum_a \Delta_a\mathbb{E}\left(\sum_{t \in C} I\left[\hat{\mu}_{(1,...,1)}(n_{(1,...,1)}(t)) \not> \max_{a\neq (1,...,1)}\hat{\mu}_{a}(n_a(t))\right]\right)
\end{equation}

Let $\epsilon < \frac12 \min_{a: \Delta_a > 0} \Delta_a$. In this problem we consider the following good event
\begin{equation}
    G_a(i) = \{|\hat{\mu}(t, n_a(t)) - \mu_a|<\epsilon \}
\end{equation}

$G_a(i)$ is the event when the observed mean of player $i$ after $n_a(t)$ samples of arm $a$ is within $\epsilon$ of the true mean of arm $a$. For any fixed $t$ in the exploitation sequence, it follows that when $\bigcap_{a, i} G_a(i)$ occurs, the optimal arm is pulled and thus the regret is $0$. Thus,
\begin{align}
     R_{T, C} &\leq \sum_{t \in C}\sum_a P\left\{\overline{\bigcap_{i} G_a(i)}\right\}\Delta_a & \\
    &\leq  \sum_{t=1}^T\sum_a \sum_{i=1}^M P\left\{\overline{ G_a(i)}\right\}\Delta_a &\\
    &=  \sum_{t=1}^T\sum_a M P\left\{|\hat{\mu}(t, n_a(t)) - \mu_a|>\epsilon \right\}\Delta_a &\\
    &\leq \sum_{t=1}^T \sum_a 2Me^{-\frac{n_a(T)\epsilon^2}{2}}\Delta_a \quad &\text{By Lemma \ref{corollary5.5}}\\
    &\leq \sum_{t=1}^T \sum_a 2Me^{-\frac{K_0(t)\log_2(t)\epsilon^2}{2}}\Delta_a &\text{using } n_a(t) \geq K_0(t)\log_2(t) \\
     &\leq \sum_a MC'\Delta_a \sum_{t=1}^Tt^{-\frac{K_0(t)\epsilon^2}{2}}\\
     & \leq \sum_a MC'\Delta_a \sum_{t=1}^\infty t^{-\frac{K_0(t)\epsilon^2}{2}}
\end{align}

Since $f(t):=t^{-\frac{K_0(t)\epsilon^2}{2}}$ is monotonic, it is bounded by $1+ \int_1^T t^{-\frac{K_0(t)\epsilon^2}{2}}dt$. However, as $\lim_{t\rightarrow \infty}K_0(t) = \infty$, it follows that there exists an $N$ such that $t > N \implies \frac{K_0(t)\epsilon^2}{2} > 3/2 $. Since $\gamma > 1 \implies \int_1^\infty \frac{1}{t^{\gamma}}dt<\infty$, it follows that 
$\int_N^\infty t^{-\frac{K_0(t)\epsilon^2}{2}}dt<\infty$.
Thus $R_{T, C}$ is bounded below a constant less then $\infty$, so our total regret $R_T = O(K(T)\log(T))$ as desired.
\end{proof}

	\section{Numerical Experiments}\label{sec:numerical}
In this section we run numerical simulations on a three player stochastic bandit problem problem with arms $a$ from the set $a \in [2]\times[2]\times[2]$. All the plots are given in Fiure \ref{plot}. Each arm has Gaussian distribution with mean in $[.1, .9]$ and standard deviation $(0, .03]$. The means and standard deviations were chosen this way so that the probability that a reward is in the interval $[0, 1]$ is very high. The horizon is for $T = 100,000$ rounds, with the entire simulations run in the same environment for $10$ times. In Figure \ref{plot}, there are two regret verses log-time plots for Problem A (left) and Problem B (right) which compare Algorithm \texttt{mUCB} with the Agnostic Standard UCB and the disCo Algorithm in \cite{xu2015distributed} with Algorithm \texttt{mDSEE}. For each algorithm, the mean regret across all $10$ runs is plotted, with the shaded areas of the same color being $2$ standard deviations above and below the mean. This gives a $95\%$ confidence interval, and from this plot it is clear to see that a time progresses, the confidence intervals for the algorithms in comparison diverge from each other. This tells us that with high probability, one algorithm has lower regret than the other. 

For Problem A, we show the regret of the \texttt{mUCB} algorithm in black. As this curve is linear, it's clear that the regret is $O(\log(n))$ in this environment. We also compared this to the simple agnostic UCB algorithm in green. More explicitly, this is where each players treat the environment as a single player stochatic multi-armed bandit environment, and picks the arms with the highest UCB index at each round as is the case in the standard UCB algorithm. As expected, this algorithm gives linear regret.

For Problem B, the a regret verses time plot for \texttt{mDSEE} algorithm is given in black. In this setting, we set $K(\lambda)=\lambda$, and thus, from Theorem \ref{thm:problemC}, we expect the regret to be $O(\log(T)^2)$. From Figure \ref{plot} it is clear that the curve grows at most quadratically, which bolsters this Theorem. We also plotted the regret of the disCo algorithm in \cite{xu2015distributed} in red. In the beginning the regret is linear, but then it becomes log. This is because $\zeta(t) := \frac{2.01}{\Delta_{\min}}\ln(t)$, which is used to determine when to explore, is very large even for small values of $t$. Thus, there is a lot of exploration that happens in the beginning that incurs linear regret. However, since $\zeta(t)$ grows logarithmically, eventually the exploration sequences will become more sparse, thus giving us log regret. While asymptotically disCo is better, for smaller values of $t$, \texttt{mDSEE} gives much better regret as one can see in this plot.

\begin{figure}
    \centering
    \includegraphics[width = .4\textwidth]{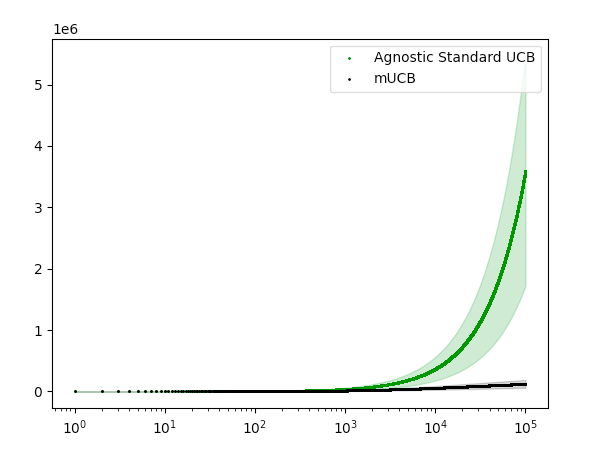}
     \includegraphics[width = .4\textwidth]{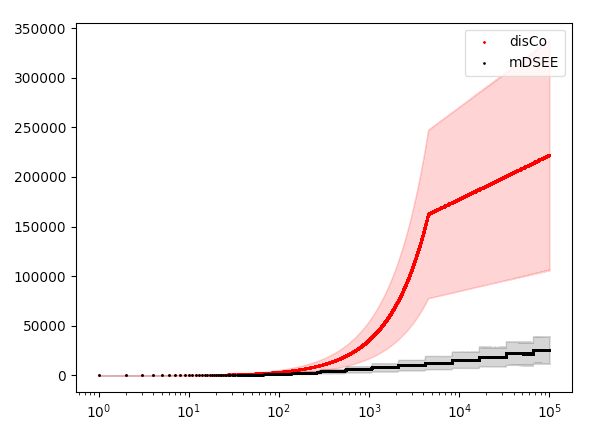} 
    \caption{Log plots of Average regret across 10 runs, with 2 standard deviations above and below the mean in the same color (95\% confidence interval). (i) Log plots of regret verses round $T$ of a multi-player stochastic bandit problem described in Problem A using Algorithm \texttt{mUCB} (black) and using the agnostic standard UCB algorithm (green). (ii) Log plots of regret verses round $T$ of a Multi-player stochastic bandit problem described in Problem B using Algorithm \texttt{mDSEE} (black) and the disCo Algorithm provided in \cite{xu2015distributed} (red)}
    \label{plot}
\end{figure}

	\section{Conclusions}
	
	In this paper, we have introduced a general framework to study decentralized online learning with team objectives in a multiplayer MAB model. If all players have the same information, this would be just like a standard single player (centralized) MAB problem. Information asymmetry is what makes the problem interesting and challenging. We introduce three types of information asymmetry: in actions, in rewards, and in both actions and rewards. We then showed that a multiplayer version of UCB algorithm is able to achieve order-optimal regret when the there is information asymmetry in actions. We then showed that we can achieve near log regret even when there is information asymmetry in both actions and rewards. Finally, we show that when there is information asymmetry in rewards, the algorithm given when there is there is information asymmetry in actions gives linear regret. For future work, considering decentralized online learning in a multiplayer MDP setting would be another interesting new direction. 
	
	
	\newpage
	\bibliographystyle{plain}
	\bibliography{references}
	\newpage
	\appendix

\section{Appendix}\label{sec:appendix}

\begin{lemma}\cite{lattimore2020bandit}
	Assume that $X_i - \mu$ are independent, $\sigma$-subgaussian random variables. Then, for any $\epsilon \geq 0$,
	\begin{equation}
		P\left(\hat{\mu} \geq \mu + \epsilon\right) \leq e^{-\frac{T\epsilon^2}{2\sigma^2}} \quad \text{and} \quad P\left(\hat{\mu} \leq \mu - \epsilon\right) \leq e^{-\frac{T\epsilon^2}{2\sigma^2}}
	\end{equation}
	where $\hat{\mu} = \frac{1}{T}\sum_{t=1}^T X_t$. 
	\label{corollary5.5}	
\end{lemma}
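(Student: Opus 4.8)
The plan is to prove this standard subgaussian tail bound via the Cramér--Chernoff (exponential moment) method, which is the canonical route for such concentration statements. First I would fix the relevant definition: a mean-zero random variable $Y$ is $\sigma$-subgaussian if its moment generating function satisfies $\mathbb{E}[e^{\lambda Y}] \leq e^{\lambda^2 \sigma^2/2}$ for every $\lambda \in \mathbb{R}$. Setting $Y_t = X_t - \mu$, each $Y_t$ is mean-zero and $\sigma$-subgaussian by hypothesis, and the centered average can be written as $\hat{\mu} - \mu = \frac{1}{T}\sum_{t=1}^T Y_t$, so that the event $\{\hat{\mu} \geq \mu + \epsilon\}$ is exactly $\{S_T \geq T\epsilon\}$ where $S_T = \sum_{t=1}^T Y_t$.

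Next I would apply the exponential Markov inequality. For any $\lambda > 0$, monotonicity of $x \mapsto e^{\lambda x}$ together with Markov's inequality gives
\begin{equation}
P\left(S_T \geq T\epsilon\right) = P\left(e^{\lambda S_T} \geq e^{\lambda T \epsilon}\right) \leq e^{-\lambda T \epsilon}\,\mathbb{E}\left[e^{\lambda S_T}\right].
\end{equation}
The key structural step is to tensorize the moment generating function: since the $Y_t$ are independent, $\mathbb{E}[e^{\lambda S_T}] = \prod_{t=1}^T \mathbb{E}[e^{\lambda Y_t}]$, and applying the subgaussian bound to each factor yields $\mathbb{E}[e^{\lambda S_T}] \leq \prod_{t=1}^T e^{\lambda^2 \sigma^2/2} = e^{T \lambda^2 \sigma^2/2}$. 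Substituting back gives the combined estimate $P(S_T \geq T\epsilon) \leq \exp\!\left(-\lambda T \epsilon + T\lambda^2\sigma^2/2\right)$, valid for every $\lambda > 0$.

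Finally I would optimize the free parameter. The exponent $T(-\lambda\epsilon + \lambda^2\sigma^2/2)$ is a convex quadratic in $\lambda$ minimized at $\lambda^\star = \epsilon/\sigma^2 > 0$, and substituting this value collapses the exponent to $-T\epsilon^2/(2\sigma^2)$, which establishes the upper-tail bound $P(\hat{\mu} \geq \mu + \epsilon) \leq e^{-T\epsilon^2/(2\sigma^2)}$. The lower-tail bound follows immediately by symmetry: the variables $-Y_t = \mu - X_t$ are also independent and $\sigma$-subgaussian, so applying the identical argument to them yields $P(\hat{\mu} \leq \mu - \epsilon) \leq e^{-T\epsilon^2/(2\sigma^2)}$. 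This result is classical and has no genuine obstacle; the only point requiring a little care is to state clearly which characterization of subgaussianity is being invoked, since the MGF bound is precisely what makes the tensorization step clean, and to restrict to $\lambda > 0$ before optimizing so that the Markov step is valid.
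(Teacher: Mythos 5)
Your proof is correct and follows exactly the Cram\'er--Chernoff route that the paper itself defers to by citing Corollary 5.5 of \cite{lattimore2020bandit}: your tensorization step $\mathbb{E}\left[e^{\lambda S_T}\right] \leq e^{T\lambda^2\sigma^2/2}$ is precisely the observation highlighted in the paper's proof sketch, namely that $\hat{\mu}-\mu$ is $\sigma/\sqrt{T}$-subgaussian, after which the exponential Markov inequality and optimization over $\lambda$ give the stated tails. You supply in full the details the paper omits, and the edge case $\epsilon = 0$ is trivially covered since the bound then equals $1$.
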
	
\begin{proof}
	The reader is encouraged to look at Corollary 5.5 of \cite{lattimore2020bandit}. It relies on the observation that $\hat{\mu} - \mu$ is $\sigma/\sqrt{T}$-subgaussian. 
\end{proof}

\begin{proof} (\textbf{Lemma \ref{lem:regret_decomp}})
	Letting $I$ be the indicator function and $a(t)$ the action taken in round $t$, we have by definition of regret given in equation \eqref{eq:regret}.
	\begin{equation}\label{eq:regret3}
		R_T^i = \sum_{a = \times_i^M \mathcal{K}_i} \sum_{t=1}^T E[(\mu^* - X_{a(t)}^i)I\{a(t) = a\}].
	\end{equation}
	
	The expected reward in round $t$ conditioned on $a(t)$ is $\mu_{a(t)}$ and thus:
	\begin{align}
		E[(\mu^* - X_{a(t)}^i)I\{a(t)= a\}|a(t)] &= I\{a(t) = a\}E[\mu^* - X_{a(t)}^i|a(t)]\\
		&= I\{a(t) = a\}(\mu^* - \mu_{a(t)})\\
		&= I\{a(t) = a\}(\mu^* - \mu_{a})\\
		&= I\{a(t)=a\}\Delta_a.
	\end{align}
	
	We can now plug this into equation \eqref{eq:regret3} and the result follows. 
\end{proof}

\subsection{Addendum to Section \ref{sec:probAproofs}}\label{sec:appendprobAproofs}

\begin{proof} (\textbf{Lemma \ref{lem:pulls}})
	We do a proof by contradiction, supposing $n_a(T) > m_a$ while $G_a$, then arm $a$ was played more than $m_a$ times over the $T$ rounds so there must be a round $t$ such that $n_a\left(t-1\right) = m_a$ and $a(t) = a$. We now apply the definition of $G_{a}$ to obtain:
	
	\begin{align}
		\eta_{a}\left(t - 1\right) &= \hat{\mu}_{a}\left(t - 1, \cdot\right) + \sqrt{\frac{2\log\left(\frac{1}{\delta}\right)}{n_a\left(t-1\right)}}\\
		&= \hat{\mu}_{a}\left(t-1, m_a\right) +  \sqrt{\frac{2\log\left(\frac{1}{\delta}\right)}{m_a}}\\
		&< \mu_{1,...,1}\\
		&< \eta_{1,...,1}\left(t-1\right),
	\end{align}
	
	and thus $a(t) \neq a$ which gives us a contradiction.
\end{proof}

\begin{proof} (\textbf{Claim \ref{claim:1}}) Taking complement of $G_a$,
	\begin{equation}
		G_{a}^c = \left\{\mu_{\left(1,...,1\right)} \geq \min_{t \in [n]} \eta_{a}(t)\right\} \bigcup \left\{\hat{\mu}_{a}\left(\cdot, m_a\right) + \sqrt{\frac{2}{m_a}\log\left(\frac{1}{\delta}\right)} \geq \mu_{\left(1,...,1\right)} \right\}
	\end{equation}
	so that 
	\begin{equation}\label{blind.eq2}
		P\left(G_{a}^c\right)\leq P\left\{\mu_{\left(1,...,1\right)} \geq \min_{t \in [n]} \eta_{a}(t)\right\}+ P \left\{\hat{\mu}_{a}\left(\cdot, m_a\right) + \sqrt{\frac{2}{m_a}\log\left(\frac{1}{\delta}\right)} \geq \mu_{\left(1,...,1\right)} \right\}.
	\end{equation}
	
	For the first term in the equation above, we have:
	\begin{align}
		P\left\{\mu_{\left(1,...,1\right)} \geq \min_{t \in [n]} \eta_{a}(t)\right\}
		&\leq P\left(\bigcup_{s \in [n]}\left\{ \mu_{1,...,1} \geq  \hat{\mu}_{\left(1,...,1\right)}\left(\cdot, s\right) + \sqrt{\frac{2\log\left(1/\delta\right)}{s}} \right\}\right)\\
		&\leq \sum_{s = 1}^T P\left\{\mu_{1,...,1} \geq  \hat{\mu}_{a}\left(\cdot, s\right) + \sqrt{\frac{2\log\left(1/\delta\right)}{s}} \right\},
		\label{blind.eq5}
	\end{align}
	where the summation index $s$ is the number of times arm $1$ has been pulled. Since $X_{a}$ has $1$-subgaussian distribution, using Theorem \ref{corollary5.5}, it follows that $\forall s \in [1, n]$
	\begin{equation}
		P\left\{\hat{\mu}_{1,...,1}\left(\cdot, s\right) \leq u_{1,...,1} -  \sqrt{\frac{2\log\left(1/\delta\right)}{s}} \right\} \leq \delta
	\end{equation}
	
	and thus 
	\begin{equation}
		P\left\{\mu_{1,...,1} \geq \min_{t \in [n]} \eta_{a}(t)\right\} \leq \sum_{s = 1}^T \delta = T\delta.
	\end{equation}
	
	Now, let us bound the second term on the RHS in equation \eqref{blind.eq2} by first choosing $m_a$ so that
	\begin{equation}\label{blind.eq3}
		\Delta_a - \sqrt{\frac{2\log\left(1/\delta\right)}{m_a}} \geq c \Delta_a.
	\end{equation}
	Then, the term we wish to bound becomes:
	\begin{align}
		P\left\{\hat{\mu}_{i}\left(\cdot, m_a\right) + \sqrt{\frac{2}{m_a}\log\left(\frac{1}{\delta}\right)} \geq \mu_{1,...,1} \right\}  &=
		P\left\{\hat{\mu}_{i}\left(\cdot, m_a\right) - \mu_{i} \geq \Delta_a -  \sqrt{\frac{2}{m_a}\log\left(\frac{1}{\delta}\right)} \right\} \\
		& \leq e^{-\frac{m_ac^2\Delta_a^2}{2},}
	\end{align}
	where the last inequality comes from using Theorem \ref{corollary5.5} again. Combining this with inequalities \eqref{blind.eq2} and \eqref{blind.eq5} proves Claim \ref{claim:1}.
\end{proof}

\paragraph*{Reward independent Regret Bounds.}  We now present proof of Theorem \ref{thm:problemA-gapindep}.

\begin{proof} (\textbf{Theorem \ref{thm:problemA-gapindep}})
	We first take the regret decomposition given by equation \eqref{eq:regret_decomp} and partition the tuples $a$ to those whose mean are at most $\epsilon$ away from the optimal  and those whose means are more than $\epsilon$. This gives us the following inequality: 
	\begin{align}
		R_T &= \sum_a \Delta_{a}\mathbb{E}\left[n_a(T)\right] = \sum_{\Delta_{a} > \epsilon} \Delta_{a}\mathbb{E}\left[n_a(T)\right] + \sum_{\Delta_{a} \leq \epsilon} \Delta_{a}\mathbb{E}\left[n_a(T)\right]\leq \sum_{\Delta_{a} \geq \epsilon} \Delta_{a}\mathbb{E}\left[n_a(T)\right] + \epsilon T.
	\end{align}
	Following the same reasoning as in Theorem \ref{thm:problemA} yields the following inequality:
	\begin{equation}
		R_T \leq 3\sum_{\Delta_{a} > \epsilon} \Delta_{a} + \sum_{ \Delta_{a} > \epsilon}\frac{2\left(3+2\sqrt{2}\right)\log(T)}{\epsilon} + \epsilon T \leq 3ab + \sum_{ \Delta_{a} > \epsilon}\frac{2\left(3+2\sqrt{2}\right)\log(T)}{\epsilon} + \epsilon T. \label{blind.eq6}
	\end{equation}
	Since inequality \eqref{blind.eq6} holds for all $\epsilon$, we can pick $\epsilon = \sqrt{\frac{\log(T)}{T}}$ to obtain result in Theorem \ref{thm:problemA-gapindep}.
\end{proof}

\paragraph*{Markovian Rewards in  Problems A.}\label{sec:probAmarkov-proofs}

We now consider the rewards to be Markovian. Note that the regret is as defined in equation \eqref{eq:regret} and we use is Algorithm \ref{algo:mucb}. The optimal arm is measured relative to the optimal stationary Markov distribution rather than the arm with the highest mean.  With this definition of regret the regret decomposition as stated in equation \eqref{eq:regret_decomp} still holds and will be used to derive an upper bound for the regret. 

Theorem \ref{thm:lezaud98} is needed below which is restated for completeness.
\begin{theorem}\cite{lezaud1998chernoff}\label{thm:lezaud98}
	Let $X_t, t \geq 1$ be an irreducible, aperiodic Markov chain on a finite state space $\mathcal{X}$ with transition probability matrix $P$, an initial distriubtion $\lambda$ and a stationary distribution $\pi$. Denote $N_\lambda = \left\lvert \left\lvert \left(\frac{\lambda_x}{\pi_x}, x \in \mathcal{X}\right)\right\rvert\right\rvert$. Let $\rho$ be the eigenvalue gap, $1 - \lambda_2$, where $\lambda_2$ is the second largest eigenvalue of he matrix $\Tilde{P}$, where $\Tilde{P}$ is the multiplicative symmetrization of the transition matrix $P$. Let $f:\mathcal{X} \rightarrow \mathbb{R}$ be such that $\sum_{x \in \mathcal{X}}\pi_x f\left(x\right) = 0, ||f||_\infty \leq 1, ||f||_2^2 \leq 1$. If $\Tilde{P}$ is irreducible, then for any $\gamma>0$, $P\left(\sum_{a = 1}^T \frac{X_a}{t} \geq \gamma\right) \leq N_\lambda e^{-\frac{t\rho\gamma^2}{28}}$.  
	\begin{proof}
		The reader is encouraged to look at proof of Theorem 1.1 in \cite{lezaud1998chernoff}
	\end{proof}
\end{theorem}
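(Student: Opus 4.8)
The statement is a Cram\'er--Chernoff concentration inequality for the empirical average $\tfrac1t S_t$, where $S_t := \sum_{i=1}^t f(X_i)$, of a bounded centered test function along a finite ergodic chain; the spectral gap $\rho$ of the multiplicative symmetrization plays the role that the variance plays in the i.i.d.\ case. The plan is to follow the exponential-moment route, reducing the tail to the growth rate of a tilted transfer operator and then controlling that growth rate through $\rho$. First I would apply Markov's inequality to $e^{\theta S_t}$: for every $\theta > 0$,
\[ P\!\left(\tfrac1t S_t \geq \gamma\right) \leq e^{-\theta t \gamma}\, \mathbb{E}_\lambda\!\left[e^{\theta S_t}\right], \]
so the whole problem becomes one of estimating the moment generating function and optimizing over $\theta$. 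Expanding the expectation by repeated conditioning and the Markov property represents it as a matrix product: with the diagonal tilt $M_\theta := \mathrm{diag}\big(e^{\theta f(x)}\big)_{x\in\mathcal{X}}$ and tilted kernel $P_\theta := P M_\theta$,
\[ \mathbb{E}_\lambda\!\left[e^{\theta S_t}\right] = \big\langle \lambda,\, M_\theta (P M_\theta)^{t-1}\mathbf{1}\big\rangle, \]
so the exponential growth of the MGF is governed by the spectral radius (Perron eigenvalue) $\beta(\theta)$ of $P_\theta$.

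Because $P$ need not be reversible, $P_\theta$ is not self-adjoint and its spectral radius is awkward to control directly; here I would pass to $L^2(\pi)$, where the adjoint $P^\ast$ has kernel $P^\ast(x,y) = \pi_y P(y,x)/\pi_x$ and the multiplicative symmetrization is the self-adjoint operator $\tilde P = P^\ast P$, whose top eigenvalue is $1$ and whose second eigenvalue $\lambda_2$ gives the gap $\rho = 1-\lambda_2$. Two things happen at this stage. The non-stationary start is peeled off by Cauchy--Schwarz in $L^2(\pi)$, namely $\langle \lambda, g\rangle = \langle \lambda/\pi,\, g\rangle_{L^2(\pi)} \leq N_\lambda\, \|g\|_{L^2(\pi)}$ with $N_\lambda = \|\lambda/\pi\|_{L^2(\pi)} = \big(\sum_x \lambda_x^2/\pi_x\big)^{1/2}$, which is exactly the prefactor in the claim; and the operator growth is reduced to the top eigenvalue of a \emph{self-adjoint} tilted operator built from $M_\theta$ and $P$, which can be compared to the spectrum of $\tilde P$.

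The crux -- and the step I expect to be genuinely hard -- is to bound $\beta(\theta)$, equivalently $\log\beta(\theta)$, in terms of the gap $\rho$ \emph{with the explicit constant}. I would use the Rayleigh--Ritz variational characterization of the top eigenvalue together with Kato's analytic perturbation theory about $\theta = 0$. At $\theta=0$ the top eigenvalue is $1$ with eigenvector $\mathbf 1$; the first-order coefficient equals $\pi(f)$, which vanishes by the hypothesis $\sum_x \pi_x f(x)=0$; the second-order coefficient is a variance-type quadratic form that the spectral gap controls, using $\|f\|_2^2 \leq 1$, while the radius of validity of the expansion and the control of the Taylor remainder rely on $\|f\|_\infty \leq 1$. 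Tracking every constant through this expansion -- bounding the quadratic form by a universal multiple of $\theta^2/\rho$ and showing that the higher-order terms stay subdominant for $\theta$ in an admissible range -- is precisely what produces the denominator $28$; the rest is routine bookkeeping.

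Finally, substituting a bound of the form $\mathbb{E}_\lambda[e^{\theta S_t}] \leq N_\lambda\, \exp\!\big(c\,\theta^2 t/\rho\big)$ into the Markov estimate and choosing $\theta$ proportional to $\rho\gamma$ to minimize the resulting exponent yields
\[ P\!\left(\tfrac1t S_t \geq \gamma\right) \leq N_\lambda\, e^{-t\rho\gamma^2/28}, \]
after checking that the minimizing $\theta$ lies in the admissible range, which holds because $\gamma \leq \|f\|_\infty \leq 1$. This completes the plan, with the explicit constant-tracking in the eigenvalue perturbation being the only nontrivial ingredient.
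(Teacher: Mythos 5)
The paper does not actually prove this statement: it is imported verbatim from Lezaud's Theorem 1.1 (with some transcription slips --- the sum should read $\frac1t\sum_{a=1}^{t} f(X_a)$, and $T$ and $t$ are conflated), and the ``proof'' is simply a pointer to \cite{lezaud1998chernoff}. Your sketch is, in effect, a faithful reconstruction of the cited proof itself: Markov's inequality applied to $e^{\theta S_t}$, the representation of the moment generating function as $\langle \lambda, M_\theta (PM_\theta)^{t-1}\mathbf{1}\rangle$, Cauchy--Schwarz in $L^2(\pi)$ to peel off the non-stationary start (correctly identifying $N_\lambda = \|\lambda/\pi\|_{L^2(\pi)}$ as the source of the prefactor), passage to the self-adjoint operator $M_\theta P^\ast P M_\theta$ to handle non-reversibility via the multiplicative symmetrization $\tilde P = P^\ast P$, and Kato perturbation theory about $\theta = 0$ using $\pi(f)=0$ to kill the first-order term and the gap $\rho$ to control the second-order term. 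All of these steps are correct and are exactly Lezaud's route (itself descended from Gillman and Dinwoodie).

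The one caveat: you dismiss the constant-tracking in the eigenvalue perturbation as ``routine bookkeeping,'' but it is the substantive content of Lezaud's paper --- explicit resolvent bounds in Kato's expansion are what bound the perturbed top eigenvalue by $1 + c\,\theta^2/\rho$ on an admissible range of $\theta$ and produce the denominator $28$ in the non-reversible case, and verifying that the optimizing $\theta \propto \rho\gamma$ stays admissible (using $\gamma \leq \|f\|_\infty \leq 1$) depends on those same explicit constants. So what you have is a correct and well-structured plan whose hardest quantitative step is deferred rather than executed; since the paper defers the entire proof to the citation, this is an entirely reasonable level of detail, and nothing in your outline would fail when fleshed out.
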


\begin{proof} (\textbf{Theorem \ref{thm:problemA-markov}})
	Suppose without loss of any generality that the arm $\left(1,...,1\right)$ is optimal (has the highest expected reward). We can consider the ``good'' event as defined by equation \eqref{eq_good}, and decompose it as in \eqref{blind.eq2}. We bound the term $P \left\{\hat{\mu}_{a}\left(\cdot, m_a\right) + \sqrt{\frac{2}{m_a}\log\left(\frac{1}{\delta}\right)} \geq \mu_{\left(1,...,1\right)} \right\}$ using Theorem \ref{thm:lezaud98} with $\gamma = \mu_{\left(1,...,1\right)} - \sqrt{\frac{2}{m_a}\log\left(\frac{1}{\delta}\right)}$ to obtain
	\begin{equation}\label{Markovian.eq1}
		P \left\{\hat{\mu}_{a}\left(\cdot, m_a\right) + \sqrt{\frac{2}{m_a}\log\left(\frac{1}{\delta}\right)} \geq \mu_{\left(1,...,1\right)} \right\} \leq C\delta^{\frac{K_2 T}{m_a}}e^{K_2T\sqrt{\frac{\log\left(\frac{1}{\delta}\right)}{m_a}}}
	\end{equation}
	for some constants $C, K_2, k_2 > 0$. For the other term $P\left\{\mu_{\left(1,...,1\right)} \geq \min_{t \in [n]} \eta_{a}(t)\right\}$, we can use equation \eqref{blind.eq2} and just bound:
	\begin{equation}
		\sum_{s = 1}^T P\left\{\mu_{1,...,1} \geq  \hat{\mu}_{a}\left(\cdot, s\right) + \sqrt{\frac{2\log\left(1/\delta\right)}{s}} \right\} = \sum_{s = 1}^T P\left\{\hat{\mu}_{a}\left(\cdot, s\right) \leq \mu_{1,...,1}  -\sqrt{\frac{2\log\left(1/\delta\right)}{s}} \right\}.
	\end{equation}
	
	Using $\gamma = \mu_{1,...,1} - \sqrt{\frac{2\log\left(1/\delta\right)}{s}}$, we can use Theorem \ref{thm:lezaud98} again to obtain: 
	\begin{equation}\label{Markovian.eq2}
		P\left\{\mu_{1,...,1} \geq  \hat{\mu}_{a}\left(\cdot, s\right) + \sqrt{\frac{2\log\left(1/\delta\right)}{s}} \right\} \leq C'\delta^{\frac{K'_1 T}{m_a}}e^{K'_2T\sqrt{\frac{\log\left(\frac{1}{\delta}\right)}{s}}} \leq C'\delta^{\frac{K'_1 T}{m_a}}e^{K'_2T\sqrt{\frac{\log\left(\frac{1}{\delta}\right)}{T}}} 
	\end{equation}
	for some constants $C', k'_1, k'_2 > 0$. The last inequality comes from $s \in [T]$. Thus, using equations \eqref{Markovian.eq1} and \eqref{Markovian.eq2} and plugging it into equation \eqref{eq:expectation} yields:
	
	\begin{equation}
		\mathbb{E}\left[n_a(T)\right] \leq m_a + C\delta^{\frac{K_2 T}{m_a}}e^{K_2T\sqrt{\frac{\log\left(\frac{1}{\delta}\right)}{m_a}}}+ C'\delta^{\frac{K'_1 T}{m_a}}e^{K'_2T\sqrt{\frac{\log\left(\frac{1}{\delta}\right)}{T}}}
	\end{equation}
	
	Letting $\delta = \frac{1}{n^\alpha}$ for some $\alpha > 0$, we just need $\alpha$ and $m_a$ to satisfy:
	\begin{align}
		&-\frac{\alpha K_2 T}{m_a} + k_2 T\sqrt{\frac{\alpha}{\log(T) m_a}} < 0\\
		&-\frac{\alpha k'_1 T}{m_a} + k'_2 T\sqrt{\frac{\alpha}{\log(T) m_a}} < 0
	\end{align}
	
	for all sufficiently large $T$. If we let $m_a= \log\left(\frac{1}{\delta}\right)$, then letting $\alpha > \left(\frac{K_2}{K_2}\right)^2$ is sufficient. Thus, with this value of $\alpha$, we now have the inequality,
	\begin{equation}\label{Markovian.eq3}
		\mathbb{E}\left[n_a(T)\right] \leq 2C' + \alpha \log(T). 
	\end{equation}
	Plugging this into \eqref{eq:regret_decomp} we obtain the regret bound in Theorem \ref{thm:problemA-markov}.
\end{proof}

\end{document}